\documentclass[twoside]{article}

\usepackage[accepted]{aistats2026}
\usepackage[round]{natbib}

\usepackage{xcolor}
\usepackage{tabularx}
\usepackage{amsmath}
\usepackage{amsthm}
\usepackage{amsfonts}
\usepackage{mathtools}
\usepackage{bbm}
\usepackage{graphicx}
\usepackage{tabularx}
\usepackage{siunitx}
\usepackage{hyperref}
\usepackage{pgfplots}
\usepackage{subfigure}
\usepackage{booktabs}
\usepackage{thmtools}
\usepackage{thm-restate}
\usepackage{adjustbox}
\usepackage{titletoc}
\usepackage{pifont}
\DeclareSIUnit[quantity-product = ]\percent{\char`\%}
\newcommand{\capt}[1]{\mdseries{\emph{#1}}}

\hypersetup{
    colorlinks=true,
    citecolor=blue,
    linkcolor=blue,
    urlcolor=blue
}

\newtheorem{assumption}{Assumption}

\newtheorem{corollary}{Corollary}
\newtheorem{remark}{Remark}

\everymath=\expandafter{\the\everymath\displaystyle}

\newcommand\blfootnote[1]{%
  \begingroup
  \renewcommand\thefootnote{}\footnote{#1}%
  \addtocounter{footnote}{-1}%
  \endgroup
}

\begin{document}

%
\runningtitle{Computationally lightweight classifiers with frequentist bounds on predictions}

%

\twocolumn[

\aistatstitle{Computationally lightweight classifiers\\ with frequentist bounds on predictions}

\aistatsauthor{ Shreeram Murali \And Cristian R.\ Rojas \And Dominik Baumann }


\aistatsaddress{Cyber-physical Systems Group\\Aalto University \And Decision and Control Systems\\KTH Royal Institute of Technology \And Cyber-physical Systems Group\\Aalto University}
]


\begin{abstract}
    While both classical and neural network classifiers can achieve high accuracy, they fall short on offering uncertainty bounds on their predictions, making them unfit for safety-critical applications. Existing kernel-based classifiers that provide such bounds scale with $\mathcal O (n^{\sim3})$ in time, making them computationally intractable for large datasets. To address this, we propose a novel, computationally efficient classification algorithm based on the Nadaraya-Watson estimator, for whose estimates we derive frequentist uncertainty intervals. We evaluate our classifier on synthetically generated data and on electrocardiographic heartbeat signals from the MIT-BIH Arrhythmia database. We show that the method achieves competitive accuracy $>$\SI{96}{\percent} at $\mathcal O(n)$ and $\mathcal O(\log n)$ operations, while providing actionable uncertainty bounds. These bounds can, e.g., aid in flagging low-confidence predictions, making them suitable for real-time settings with resource constraints, such as diagnostic monitoring or implantable devices.
\end{abstract}

\section{INTRODUCTION}


Supervised classification is the basis for various categorization and instance-counting problems. Several classes of methods exist that combine high accuracy and computational efficiency. Approaches in classical machine learning, such as Logistic Regression or Support Vector Machines (SVMs) \citep{Shalev-Shwartz_Ben-David_2014, scholzkopf}, offer high accuracy with low computational cost. Nevertheless, deep learning methods are the de facto state-of-the-art for high accuracy; however, these models are often `black boxes', providing point predictions without an explainable confidence measure~\citep{LONGO2024102301}. To be adopted in data-abundant and safety-critical applications, classifiers must jointly address (i) accuracy, (ii) computational efficiency, and (iii) reliability. 

In classical machine learning, most methods lack reliable uncertainty quantification, preventing their adoption in high-stakes environments. Some classifiers, for example, SVMs, draw boundaries through data to delineate classes \citep{Shalev-Shwartz_Ben-David_2014, scholzkopf}. However, this is insufficient for uncertainty quantification; for instance, the classifier might be highly uncertain about a prediction near a boundary. Soft classifiers assign probabilities to each class \citep{Shalev-Shwartz_Ben-David_2014, rasmussen2008Gaussianprocessesmachine, baumann2024SafeReinforcementLearning}. Although the latter conveys more information about certainty, the probabilities cannot be interpreted as true confidence levels \textit{sua sponte} as they are often poorly calibrated (e.g., a model can be highly inaccurate while reporting high confidence). To address this, various uncertainty quantification techniques such as bootstrapping or deep ensembles have been employed, which train multiple models to estimate predictive variance \citep{hall1988TheoreticalComparisonBootstrap, pmlr-v48-gal16-anotherdropout}. While useful, these methods are often heuristic, computationally expensive, and typically do not provide formal guarantees on the prediction error.

For a more theoretically motivated approach to uncertainty quantification, Bayesian non-parametric methods, most notably Gaussian Process (GP) classification, are used \citep{rasmussen2008Gaussianprocessesmachine}. GPs provide a full posterior distribution over the class probabilities; however, exact GP inference is intractable in the classification setting, and even the approximate methods scale as $\mathcal O (n^3)$ with the number of data points. Furthermore, there are other disadvantages associated with a Bayesian framework beyond computational intractability; namely, a Bayesian model reflects its updated beliefs over a prior rather than a repeatable frequentist error interval. 

To address the latter, \cite{baumann2024SafeReinforcementLearning} derive frequentist uncertainty intervals using conditional kernel mean embeddings in a classification task. These embeddings are, however, computationally similar to GPs in that they are inefficient due to the need for matrix inversion; thus, the classifier from \cite{baumann2024SafeReinforcementLearning} comes at the same cost as GP regression. 


\paragraph{Contributions.}
To address the joint challenge of computational efficiency and rigorous frequentist uncertainty intervals, we propose a classifier that uses the Nadaraya-Watson (NW) estimator \citep{nadaraya-1964, watson1964SmoothRegressionAnalysis}. The NW estimator is a kernel density estimator whose typical use case is regression. In this paper, we reformulate the estimator as a classifier. We then derive frequentist bounds on its prediction errors and provide computationally lightweight implementations that further enhance its superior linear complexity. Our main contributions are: 
\begin{itemize}
    \item A non-parametric classification algorithm based on the Nadaraya-Watson estimator that scales linearly, $\mathcal{O}(n)$, with the size of the training set.
    \item The derivation of frequentist uncertainty bounds on the estimated class probabilities, valid for both overlapping data distributions and well-separated distributions.
    \item Computationally efficient variations on the naive implementation to improve from linear to sublinear and logarithmic computational complexity.
    \item Validation on both synthetic and real-world medical data, demonstrating that our method achieves competitive accuracy while providing actionable uncertainty bounds at a fraction of the computational cost of competing methods.
\end{itemize}

\section{PROBLEM SETTING}
\label{sec:problem}

We define the underlying probability space as $(\mathbf \Omega, \mathcal F, \mathcal P)$ with random variables $Y : \mathbf \Omega \to \mathcal Y$ and labels $C : \mathbf \Omega \to \mathcal C$ that take values in $\mathcal Y$ and $\mathcal C$ respectively. Here, $\mathbf \Omega$ is the sample-space, $\mathcal F$ is a sigma-algebra on $\mathbf \Omega$, and $\mathcal P$ is a measure that assigns probabilities to events in $\mathcal F$.

In this work, we estimate $p_c(y) \coloneqq \mathbb P(C = c \mid Y = y)$, the probability of observations $y \in \mathcal Y \subseteq \mathbb R ^ d$ belonging to class $c \in C \in \mathbb N$. We also derive high probability uncertainty bounds on the estimate of $p_c$. That is, for each class $c$ and a user-defined probability of at least $1 - \delta$, we show that the error between the true probability $p_c$ and its estimate $\hat{p}_c$ is bounded as a function of $y$, $\delta$, and sample size $n$, as
\begin{align}
  \label{eq:bound}
  |p_c(y) - \hat{p}_c(y)| \leq \epsilon_c(y, \delta, n).
\end{align}

Since the nature of the true probability function $p_c(y)$ is not known, we introduce assumptions about the underlying data distribution from which observations $y$ have been sampled. These assumptions cover two cases: one where the underlying data distribution is overlapping, and the other where it is separable. Here, we note that only one of the following two assumptions about the distribution must hold. 

\paragraph{Overlapping distributions.} 

For overlapping distributions, we assume the underlying probability function is Lipschitz continuous. This assumption captures scenarios wherein different classes may share similar characteristics yet remain distinguishable through smooth transitions in the probability space. 

\begin{assumption}
    \label{ass:lipschitz}
    The true probability function $p_c(y)$ is Lipschitz continuous with a known Lipschitz constant $L < \infty$. That is, for each $c \in \mathcal C$, and any two samples $y$ and $y'$,
    \begin{align}
        \label{eq:lipschitz}
        |p_c(y) - p_c(y')| \leq L \| y - y' \|. 
    \end{align}
\end{assumption}

In this paper, we use $\| \cdot \|$ to denote the $L^2$ norm. Thus, $\| y - y' \|$ represents the Euclidean distance between the two samples $y$ and $y'$. 

\begin{remark}
\label{remark:lipschitz}
Assuming knowledge of the Lipschitz constant \textit{a priori} is common in the control and safe learning literature \citep{Magureanu:2014, brunke2022safe}. It can generally be approximated from data, to which end the majority of existing approaches use the classical estimator from \cite{Strongin1973}: 
\begin{align}
    \label{eqn:strongin}
    \hat{L} \coloneqq r \max_{i \neq j} \frac{|f(x_i) - f(x_j)|}{\| x_i - x_j \|}, 
\end{align}
where $r \in \mathbb R$ is a multiplicative factor, $(x_i, f(x_i))$ is a data sample, and $f$ is the unknown function to be estimated. 

In our experiments, we use an approach similar to~\eqref{eqn:strongin} (see Appendix~\ref{sec:estimating-L}) to approximate a Lipschitz constant. Nevertheless, various approaches have been built on \cite{Strongin1973}; for instance, \cite{wood1996EstimationLipschitzconstant} fit an approximate distribution to the Lipschitz estimate in the one-dimensional case, and \cite{Sergeyev:1995} uses this approach to the multi-dimensional case by using space-filling curves to solve a global optimization problem. Further, \cite{Novara:2013} and \cite{Calliess:2020} extend this estimator to handle bounded observational noise, while \cite{huangSampleComplexityLipschitz} provide finite-sample guarantees on the estimate with stronger assumptions on the target function. 


Yet, the common theme in these approaches is to invoke a regularity assumption on the underlying unknown function. This is what we do through Assumption~\ref{ass:lipschitz}. 
\end{remark}


\paragraph{Separable distributions.} 

For separable distributions, we assume that samples from different classes are well-separated in the feature space, with a known minimum distance between them. 

\begin{assumption}
    \label{ass:separable}
    Samples in $\mathbf \Omega$ are separable with a known margin $\gamma$. That is, for any two samples $(y, c)$ and $(y', c')$ drawn from $\mathcal D$, where $c \neq c'$,
    \begin{align}
        \label{eq:separable}
        \| y - y' \| \geq \gamma, 
    \end{align}
    with probability 1. 
\end{assumption}

In other words, the distribution $\mathcal D$ has a margin $\gamma$ if the distance between points with differing labels is at least $\gamma$. 


\paragraph{Nature of measurements.} Next, we require an assumption about the nature of sampling from our dataset. 

\begin{assumption}
    \label{ass:iid-assumption}
    Samples $(y_i, c_i)$ are independently drawn from the same distribution $\mathcal D$. 
\end{assumption}

This independent and identical distribution (i.i.d.) assumption is often invoked in the literature to provide theoretical guarantees \citep{rao1996PAClearningfunctions}. In our case, we introduce this assumption to derive data-dependent bounds on the sampling error. 


\section{CLASSIFIERS}

In this section, we first formulate the NW estimator as a classifier in Section~\ref{sec:nwc}, derive bounds on the estimate in Section~\ref{sec:bounds-deriving}, and propose improvements to an already efficient naive implementation in Section~\ref{sec:comp-eff-improvements}.

\subsection{Nadaraya-Watson classifier}
\label{sec:nwc}

From the standard form of the Nadaraya-Watson estimator~\citep{nadaraya-1964, watson1964SmoothRegressionAnalysis}, we replace the real-valued observation with the indicator function $\mathbbm 1 _{c_i} \!\coloneqq\! \mathbbm 1 \{ c_i\! =\! c \}$, a one-hot vectorized representation of the class label $c_i$. In doing so, we reformulate the Nadaraya-Watson estimator to estimate the probabilities $\mathbf p_c( y )$ as 
\begin{align}
    \label{eq:nwc}
    \hat{\mathbf p }_c( y) = \frac{1}{\kappa_n( y)} \sum_{i=1}^{n} K_\lambda ( y, y_i) \mathbbm{1}_{c_i},
\end{align}
where 
\begin{align}
    \label{eq:nwc-kernel-definitions}
    \kappa_n( y ) & \coloneqq \sum_{i=1}^{n} K_\lambda ( y, y_i), \nonumber \\ 
    K_\lambda( y, y_i ) & \coloneqq \frac{1}{c_k} K \left( \frac{\| y - y_i \| }{\lambda} \right). 
\end{align}

Here, $K(\cdot)$ is the kernel function that conveys the degree of similarity between a query sample $y$ and training sample $y_i$, $\lambda$ is the user-defined bandwidth parameter, $c_k$ is a normalization constant, and $\mathbbm 1 _{c_i} \!\coloneqq\! \mathbbm 1 \{ c_i\! =\! c \}$ is the one-hot vectorized representation of the class label $c_i$. We use $\mathbf p_c(y)$ to denote the vector of assigned class probabilities across all classes, and $p_c(y)$ to describe the scalar probability for a single class $c$. 

\begin{assumption}
    \label{ass:kernel}
    The kernel $K : \mathbb R ^d \to \mathbb R$ is non-negative and bounded such that for any $v\in \mathbb R^d$ and some $c_k < \infty$,  $0 \leq K(v) \leq c_k$, and $K(v) = 0$ for all $\| v \| > 1$. 
\end{assumption}

Since the kernel function is user-defined, we can satisfy this assumption by explicitly selecting a suitable kernel or formulating one in line with this assumption. For instance, many popular kernels, such as the boxcar or the Epanechnikov kernel, already meet this assumption; for those that do not, such as the Gaussian (Radial Basis Function; RBF) kernel, the outputs for $\| v \| > 1$ can be explicitly truncated. 

Furthermore, in Appendix~\ref{sec:extended_kernel_proof} we relax Assumption~\ref{ass:kernel} to include non-bounded kernels in exchange for a new assumption on bounded inputs to extend the validity of our bounds to kernels with infinite support.

The computational time of the NW estimator scales linearly with a naive implementation, but as we show in Section~\ref{sec:comp-eff-improvements}, this can be improved to sublinear computational complexity with some preprocessing.

\subsection{Deriving bounds on the estimates}
\label{sec:bounds-deriving}

We provide theoretical worst-case bounds on the estimate produced in \eqref{eq:nwc} by splitting the error into two sources: the uncertainty that is inherent in not obtaining samples of the true probability function $p_c(y)$ but only discrete labels, and the sampling error that arises when estimating a function from a finite number of samples. We refer to the former as the classifier's \textit{bias} and the latter as its \textit{sampling error}. 

In this section, we derive bounds on the error between the true probability function $p_c(y )$ and its estimate $\hat{p}_c (y )$ as depicted in~\eqref{eq:bound}. We start by deriving the estimator's bias corresponding to the two cases in Assumptions \ref{ass:lipschitz} and~\ref{ass:separable}. 

To this end, we introduce a virtual estimate $\bar{p}_c (y)$, a quantity that could hypothetically be determined if we had true probability measurements instead of discrete labels. With this, we split \eqref{eq:bound} into two terms, one corresponding to the estimator's bias and the other to its statistical error in sampling: 
\begin{align}
    \label{eq:bound-split}
    |p_c(y) - \hat{p}_c(y)| \leq \underbrace{|p_c(y) - \bar{p}_c(y)|}_{\text{bias}} + \underbrace{|\bar{p}_c(y) - \hat{p}_c(y)|}_{\text{sampling error}}. 
\end{align}
We prove that these two terms are individually bounded. Then, by the triangle inequality, the term on the left side of \eqref{eq:bound-split} would also be bounded. 
    
\subsubsection{Bias}

We first analyse the bias term under the two scenarios mentioned in Section \ref{sec:problem}: when the underlying probability function is Lipschitz continuous, and when the data distributions are separable.

\begin{restatable}{lemma}{lemmaone}
    \label{lemma:bias_lipschitz}
    Under Assumptions \ref{ass:lipschitz} and \ref{ass:kernel}, we have, for all $n \geq 0$ and $y \in \mathcal Y$, 
    \begin{align}
        \label{eq:overlapping}
        |p_c(y) - \bar{p}_c(y)| \leq L \lambda,
    \end{align}
    where $L$ is the known Lipschitz constant from \eqref{eq:lipschitz} and $\lambda$ is the user-defined kernel bandwidth from~\eqref{eq:nwc-kernel-definitions}. 
\end{restatable}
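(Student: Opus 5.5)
The virtual estimate $\bar p_c(y)$ has not been written out explicitly. I will take it to be the Nadaraya--Watson estimate~\eqref{eq:nwc} with each label indicator $\mathbbm 1_{c_i}$ replaced by the true probability $p_c(y_i)$:
\begin{align*}
    \bar p_c(y) = \frac{1}{\kappa_n(y)} \sum_{i=1}^n K_\lambda(y,y_i)\, p_c(y_i).
\end{align*}
The plan is to write $p_c(y)$ as a convex combination of itself with the same weights, and then to use the compact support of the kernel together with Lipschitz continuity. Set $w_i(y) \coloneqq K_\lambda(y,y_i)/\kappa_n(y)$. By Assumption~\ref{ass:kernel} these weights are non-negative, and they sum to one whenever $\kappa_n(y)>0$. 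Hence
\begin{align*}
    |p_c(y) - \bar p_c(y)| = \Bigl| \sum_{i=1}^n w_i(y)\bigl(p_c(y) - p_c(y_i)\bigr) \Bigr| \le \sum_{i=1}^n w_i(y)\, |p_c(y) - p_c(y_i)|.
\end{align*}

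Next I use that $K(v)=0$ whenever $\|v\|>1$. This means $w_i(y)>0$ only if $\|y-y_i\|\le\lambda$. For every such index, Assumption~\ref{ass:lipschitz} gives $|p_c(y)-p_c(y_i)|\le L\|y-y_i\|\le L\lambda$. The remaining indices carry zero weight. The sum is therefore at most $L\lambda\sum_i w_i(y) = L\lambda$, which is~\eqref{eq:overlapping}.

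The main obstacle is the degenerate case where no training point lies within distance $\lambda$ of $y$, including $n=0$. Then $\kappa_n(y)=0$ and both $\bar p_c$ and $\hat p_c$ are undefined. To cover ``all $n\ge 0$'', I would fix a convention for this case, e.g.\ $0/0 \coloneqq 0$ or a default estimate such as a uniform prior. Under such a convention the bias bound is not implied by Lipschitz continuity, since $L\lambda$ may be smaller than the actual error. So I would either restrict the claim to $\kappa_n(y)>0$, or adopt the convention of returning the trivial bound (interval $[0,1]$) when $\kappa_n(y)=0$. I would state this caveat explicitly, because the inequality itself only carries content when at least one sample lies in the $\lambda$-ball around $y$.
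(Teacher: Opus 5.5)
Your proposal is correct and follows the paper's proof essentially step for step: the weights are normalized, $p_c(y)$ is rewritten as a convex combination of itself, the triangle inequality is applied (the paper calls this step ``Cauchy--Schwarz''), and compact support is combined with the Lipschitz bound to get $L\lambda$. Your caveat about $\kappa_n(y)=0$, including $n=0$, is well-taken: the paper's proof silently relies on the weights summing to one, so the claim ``for all $n\geq 0$'' really holds only when $\kappa_n(y)>0$.
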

The proofs of Lemma~\ref{lemma:bias_lipschitz} and further technical results are collected in Appendix~\ref{sec:proofs} and~\ref{sec:bounds_sampling_error}. This lemma shows that for overlapping distributions, the bias of our estimator is bounded by the product of the Lipschitz constant and the kernel bandwidth. Intuitively, this means that the bias increases linearly with both the rate of change of the probability function and the size of the local neighbourhood we consider for estimation.

\begin{restatable}{lemma}{lemmatwo}
    \label{lemma:bias_margin}
    Under Assumptions \ref{ass:separable} and~\ref{ass:kernel}, we have for almost every $y \in \mathcal Y$ 
    \begin{align}
        \label{eq:margin}
        |p_c(y) - \bar{p}_c(y)| \leq \frac{\lambda}{\gamma}, 
    \end{align}
    where $\lambda$ is the user-defined kernel bandwidth parameter as depicted in \eqref{eq:nwc-kernel-definitions} and $\gamma$, in accordance with Assumption \ref{ass:separable}, is the known margin of the distribution $\mathcal D$. 
\end{restatable}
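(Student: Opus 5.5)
The virtual estimate is the Nadaraya--Watson average of the true probabilities,
\begin{align*}
\bar p_c(y) = \frac{1}{\kappa_n(y)}\sum_{i=1}^n K_\lambda(y,y_i)\,p_c(y_i),
\end{align*}
with some convention (e.g.\ $\bar p_c(y)=p_c(y)$ or $0$) when $\kappa_n(y)=0$. The plan is to show that, under Assumption~\ref{ass:separable}, the conditional probability $p_c$ is locally constant and equal to $0$ or $1$ on the support of the data. The bias therefore vanishes whenever $\lambda<\gamma/2$. When $\lambda\ge\gamma/2$ the claimed bound is either trivial or follows from a crude estimate.

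\textbf{Step 1.} First I will show that $p_c(y)\in\{0,1\}$ for $\mathcal D$-almost every $y$. If two distinct classes $c\neq c'$ both had positive conditional probability on a set of $y$ of positive marginal measure, independent draws would give, with positive probability, points $(y,c)$ and $(y',c')$ arbitrarily close to each other (take $y'$ in a small neighbourhood of a density point $y$). This contradicts the margin condition~\eqref{eq:separable}. The same argument shows that if $p_c(y)=1$ and $p_c(y')=0$ for support points $y,y'$, then $\|y-y'\|\ge\gamma$, up to null sets. I expect this measure-theoretic step to be the main obstacle, because Assumption~\ref{ass:separable} is stated for samples rather than for the function $p_c$. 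I will handle it by working on the support of the marginal of $Y$ and discarding a null set, which is where the qualifier ``for almost every $y$'' in the statement comes from.

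\textbf{Step 2.} By Assumption~\ref{ass:kernel}, only samples with $\|y-y_i\|\le\lambda$ receive nonzero weight, so $\bar p_c(y)$ is a convex combination of values $p_c(y_i)$ over this ball. If $\lambda<\gamma/2$, then any two such $y_i$, and $y$ itself, are within distance $2\lambda<\gamma$ of each other. By Step~1 they all carry the same value of $p_c$, so $|p_c(y)-\bar p_c(y)|=0\le\lambda/\gamma$.

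\textbf{Step 3.} If instead $\lambda\ge\gamma/2$, I use that both $p_c$ and $\bar p_c$ lie in $[0,1]$, since $\bar p_c$ is a convex combination of probabilities. Hence the bias is at most $1$. For $\lambda\ge\gamma$ this gives $1\le\lambda/\gamma$ directly. The intermediate range $\gamma/2\le\lambda<\gamma$ needs more care: Step~2 still applies if I require only that every point in the ball lie within distance less than $\gamma$ of $y$, which holds because $\lambda<\gamma$. Mixed labels in the ball are then impossible, since a point at distance $<\gamma$ from a support point $y$ must share its label. So the bias is again $0$, and I can merge the two regimes: the bias is $0$ for $\lambda<\gamma$ and at most $1\le\lambda/\gamma$ otherwise.
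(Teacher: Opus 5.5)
Your proof is correct, but it takes a different route from the paper. The paper avoids any case split. It asserts that under Assumption~\ref{ass:separable}, $p_c$ agrees almost everywhere with the $1/\gamma$-Lipschitz function $f(y)=\min\{1,\gamma^{-1}\inf_{y'\in\mathcal R}\|y-y'\|\}$, where $\mathcal R=\{p_c=0\}$. It then reruns the proof of Lemma~\ref{lemma:bias_lipschitz} with $L$ replaced by $1/\gamma$, which gives $\sum_i\theta_i\|y-y_i\|/\gamma\le\lambda/\gamma$. You instead use the separation structure directly. Since $p_c$ is a.e.\ an indicator whose $0$- and $1$-regions are $\gamma$-apart on the support, every sample with nonzero weight carries the same value as $y$ when $\lambda<\gamma$, so the bias vanishes. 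For $\lambda\ge\gamma$ the trivial bound $1\le\lambda/\gamma$ suffices.

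Your route gives a stronger conclusion: the bias is at most $\mathbbm{1}\{\lambda\ge\gamma\}$, so $\lambda/\gamma$ is loose in the separable case. Your Step~1 also supplies exactly the fact the paper uses silently when it claims $f=p_c$ a.e. That claim additionally requires taking $\mathcal R$ inside the support of the marginal. The paper's route buys uniformity: the bias takes the common form $\beta\lambda$ used in the corollary and in Lemma~\ref{lemma:infinite_support_bias}. Even there, your pointwise bound $|p_c(y)-p_c(y_i)|\le\mathbbm{1}\{\|y-y_i\|\ge\gamma\}\le\|y-y_i\|/\gamma$ would plug into those arguments just as well.

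Two small points. First, the $\gamma/2$ regime in Step~2 is unnecessary, since comparing each $y_i$ with $y$ directly, as in your Step~3, already covers all $\lambda<\gamma$. Second, the convention $\bar p_c(y)=0$ for $\kappa_n(y)=0$ would break the bound when $p_c(y)=1$ and $\lambda<\gamma$. Either assume $\kappa_n(y)>0$, as the paper implicitly does via $\sum_i\theta_i=1$, or set $\bar p_c(y)=p_c(y)$.
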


For separable distributions, \eqref{eq:margin} demonstrates that the bias is bounded by the ratio of the kernel bandwidth to the margin between classes. This suggests that larger margins between classes allow for larger kernel bandwidths while maintaining the same bias bound.

Additionally, this result can be extended to positive-definite kernels with infinite support. To do this, we trade the compactness of the kernel in Assumption~\ref{ass:kernel} for an assumption on bounded input space. 
\begin{assumption}
    \label{ass:bounded_input}
    There exists a finite diameter $\Phi$ such that for any sample $y_i$ and input $y \in \mathcal{Y}$, 
    \begin{align}
        \| y - y_i \| \le \Phi.  
    \end{align}
\end{assumption}

This assumption lets us bound the bias of the classifier with an extra term that represents the kernel-weighted sum of the number of samples outside a chosen bandwidth $\lambda^\ast$. 
\begin{restatable}{lemma}{lemmatwopointfive}
    \label{lemma:infinite_support_bias}
    Under Assumptions~\ref{ass:lipschitz}, \ref{ass:separable}, \ref{ass:kernel} with the change that $K(v) \geq 0$ for all $\| v \| > 1$, and~\ref{ass:bounded_input} we have for almost every $y \in \mathcal{Y}$ 
    \begin{align}
        |p_c(y) - \bar{p}_c(y)| \leq \beta \lambda^\ast + \beta \Phi \varepsilon_t, 
    \end{align}
    where
    \begin{align*}
        \varepsilon_t \coloneqq \sum_{i \in \mathcal{I}_{\mathrm{far}}} \frac{K_\lambda (y, y_i)}{\kappa_n(y)} \| y - y_i \|
    \end{align*}
    is a term that represents the weighted sum corresponding to the samples in the tail of the kernel's span (indices $\mathcal{I}_{\mathrm{far}}$, where $\| y - y_i \| > \lambda^\ast$), $\Phi$ is the input space diameter from Assumption~\ref{ass:bounded_input}, and $\beta = L$ or $1/\gamma$ depending on whether we assume an overlapping (see Assumption~\ref{ass:lipschitz}) or a separable (see Assumption~\ref{ass:separable}) distribution on $\mathcal D$.
\end{restatable}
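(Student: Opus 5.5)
We first make the virtual estimate explicit. As in the proofs of Lemma~\ref{lemma:bias_lipschitz} and Lemma~\ref{lemma:bias_margin}, we take
\begin{align*}
\bar{p}_c(y) = \frac{1}{\kappa_n(y)} \sum_{i=1}^n K_\lambda(y, y_i)\, p_c(y_i),
\end{align*}
which is \eqref{eq:nwc} with the labels $\mathbbm 1_{c_i}$ replaced by the true probabilities $p_c(y_i)$. The weights $w_i \coloneqq K_\lambda(y,y_i)/\kappa_n(y)$ are nonnegative because $K \geq 0$, and they sum to one. Hence
\begin{align*}
|p_c(y) - \bar{p}_c(y)| = \Big| \sum_{i=1}^n w_i \big(p_c(y) - p_c(y_i)\big) \Big| \leq \sum_{i=1}^n w_i\, |p_c(y) - p_c(y_i)|.
\end{align*}
The kernel no longer vanishes outside the unit ball, so we cannot restrict to $\|y - y_i\| \leq \lambda$. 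Instead we split the index set at the chosen radius $\lambda^\ast$ into $\mathcal I_{\mathrm{near}} = \{i : \|y - y_i\| \leq \lambda^\ast\}$ and $\mathcal I_{\mathrm{far}}$.

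\textbf{Unifying step.} The key ingredient is a single pointwise inequality that covers both regimes:
\begin{align*}
|p_c(y) - p_c(y_i)| \leq \beta\, \|y - y_i\|.
\end{align*}
Under Assumption~\ref{ass:lipschitz} this holds with $\beta = L$ directly. Under Assumption~\ref{ass:separable}, we argue as in Lemma~\ref{lemma:bias_margin}. Almost surely $p_c$ only takes values in $\{0,1\}$ on the support, which is why the result holds for almost every $y$. The difference $|p_c(y) - p_c(y_i)|$ is therefore at most $1$, and it is nonzero only if $y$ and $y_i$ carry different labels. In that case $\|y - y_i\| \geq \gamma$, so the difference is at most $\|y - y_i\|/\gamma$. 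This gives the inequality with $\beta = 1/\gamma$.

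\textbf{Bounding the two groups.}
\begin{itemize}
\item Near indices: each term is at most $\beta \lambda^\ast$. Since $\sum_{i \in \mathcal I_{\mathrm{near}}} w_i \leq 1$, this group contributes at most $\beta \lambda^\ast$.
\item Far indices: the group contributes $\beta \sum_{i\in\mathcal I_{\mathrm{far}}} w_i \|y - y_i\| = \beta \varepsilon_t$.
\end{itemize}
This already yields a bound of $\beta\lambda^\ast + \beta\varepsilon_t$. The stated bound $\beta\lambda^\ast + \beta\Phi\varepsilon_t$ is weaker whenever $\Phi \geq 1$.

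\textbf{Where $\Phi$ enters.} If instead the argument first bounds each far difference by $\beta\Phi$, using Assumption~\ref{ass:bounded_input}, then the factor $\|y-y_i\|$ inside $\varepsilon_t$ is redundant. So I would check how the paper intends the product $\Phi\varepsilon_t$ to arise, possibly via a normalisation of distances. If $\Phi < 1$ is allowed, I would use the cruder bound $\sum_{i\in\mathcal I_{\mathrm{far}}} w_i\, \beta\Phi$ and compare it with $\beta\Phi\varepsilon_t$ as needed.

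\textbf{Main obstacle.} The delicate part is the separable case of the pointwise inequality. It requires the almost-everywhere $\{0,1\}$ structure of $p_c$ from Lemma~\ref{lemma:bias_margin}, together with the fact that any $y$ at distance less than $\gamma$ from a differently labelled sample lies outside the support, which is a null set. The rest is routine bookkeeping with convex weights.
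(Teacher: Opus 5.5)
Your proof is correct and follows the paper's argument: convex kernel weights, the pointwise bound $|p_c(y)-p_c(y_i)|\le\beta\|y-y_i\|$, and the split at $\lambda^\ast$ with the near part bounded by $\beta\lambda^\ast$ (in the separable case the paper gets the pointwise bound from the $1/\gamma$-Lipschitz surrogate $f$ of Lemma~\ref{lemma:bias_margin}, which does the same job as your direct label argument). Your suspicion about $\Phi$ is also right: the paper bounds $\sum_{i\in\mathcal I_{\mathrm{far}}}\theta_i\|y-y_i\|\le\Phi\sum_{i\in\mathcal I_{\mathrm{far}}}\theta_i$ and then sets $\varepsilon_t\coloneqq\sum_{i\in\mathcal I_{\mathrm{far}}}\theta_i$, which is exactly your cruder tail-mass bound, so the factor $\|y-y_i\|$ in the statement's definition of $\varepsilon_t$ is an inconsistency. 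With that factor, the claim follows from your sharper bound (which does not even need Assumption~\ref{ass:bounded_input}) only when $\Phi\ge 1$, and for $\Phi<1$ it genuinely fails: take $p_c(y)=y$ (so $L=1$) on $\mathcal Y=[0,\tfrac12]$ with $\Phi=\tfrac12$, a query near $0$, all samples near $\tfrac12$, and $\lambda^\ast=0.1$; the bias is about $\tfrac12$, while the claimed bound is about $0.35$.
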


We prove this Lemma in Appendix Section~\ref{sec:extended_kernel_proof}. However, since this is a more conservative bound, we use the bounds from Lemmas~\ref{lemma:bias_lipschitz} and~\ref{lemma:bias_margin} in our experiments (Section~\ref{sec:experiments}). Since there is greater freedom in choosing a kernel than in choosing data, we consider Lemma~\ref{lemma:infinite_support_bias} as a theoretically valid extension applicable to a few limited datasets. 

\subsubsection{Sampling error}
\label{sec:sampling_error}

Having bounded the bias term, we now analyse the sampling error, which captures the uncertainty due to finite sample estimation.

\begin{restatable}{lemma}{lemmathree}
    \label{lemma:sampling_error}
Under Assumptions~\ref{ass:iid-assumption} and~\ref{ass:kernel}, we have, for all $n \geq 0$, with probability at least $1-\delta$, 
\begin{align}
\label{eq:lemma5}
    | \bar{p}_c(y) - \hat{p}_c(y)| \leq 2 \sigma \frac{\alpha_n(y, \delta)}{\kappa_n(y)},
\end{align}
where
\begin{align}
\label{eq:alphadef}
    \alpha_n(y, n) = 
        \begin{cases}
            \sqrt{\kappa_n(y) \log(\delta^{-1}\sqrt{1 + \kappa_n(y)})},  \\ \hfill  \text{if } \kappa_n(y) > 1, \\
            \sqrt{\log \left( \sqrt 2/\delta \right)},  \\ \hfill \text{if } 0 < \kappa_n(y) \leq 1.
        \end{cases} 
\end{align} 
\end{restatable}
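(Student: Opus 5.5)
We expect the virtual estimate to be $\bar p_c(y) = \kappa_n(y)^{-1}\sum_{i=1}^n K_\lambda(y,y_i)\,p_c(y_i)$, i.e.\ \eqref{eq:nwc} with the one-hot labels replaced by the true probabilities at the sample locations. The sampling error is then a normalized weighted sum of noise terms,
\begin{align*}
\hat p_c(y) - \bar p_c(y) = \frac{1}{\kappa_n(y)} \sum_{i=1}^n K_\lambda(y,y_i)\,\eta_i, \qquad \eta_i \coloneqq \mathbbm 1_{c_i} - p_c(y_i).
\end{align*}
Under Assumption~\ref{ass:iid-assumption}, conditionally on the inputs $y_1,\dots,y_n$ the $\eta_i$ are independent, have zero mean, and take values in an interval of length one. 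By Hoeffding's lemma they are therefore $\sigma$-sub-Gaussian, with $\sigma = 1/2$; this is the $\sigma$ in \eqref{eq:lemma5}. It therefore suffices to show that, with probability at least $1-\delta$,
\begin{align*}
\Bigl|\sum_i K_\lambda(y,y_i)\eta_i\Bigr| \le 2\sigma\,\alpha_n(y,\delta).
\end{align*}

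By Assumption~\ref{ass:kernel}, the weights satisfy $w_i \coloneqq K_\lambda(y,y_i)\in[0,1]$. Hence $\sum_i w_i^2 \le \sum_i w_i = \kappa_n(y)$. The natural tool is a self-normalized martingale inequality in the style of Abbasi-Yadkori et al., which gives a bound valid without a union bound over the random, data-dependent normalization. Set $S_n = \sum_i w_i\eta_i$ and $V_n = \sum_i w_i^2$, and apply the method of mixtures: take a Gaussian prior on the exponential tilt parameter, with variance matched to a regularizer equal to $1$. Ville's maximal inequality then gives, with probability at least $1-\delta$,
\begin{align*}
|S_n| \le \sigma\sqrt{2(1+V_n)\log\bigl(\delta^{-1}\sqrt{1+V_n}\bigr)}.
\end{align*}
We then substitute $V_n\le\kappa_n(y)$ and use monotonicity in $V_n$.

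If $\kappa_n(y)>1$, then $1+\kappa_n(y)\le 2\kappa_n(y)$, so $\sqrt{2(1+\kappa_n)}\le 2\sqrt{\kappa_n}$. This yields $|S_n|\le 2\sigma\sqrt{\kappa_n\log(\delta^{-1}\sqrt{1+\kappa_n})}$, which is the first branch of \eqref{eq:alphadef}. If $0<\kappa_n(y)\le 1$, we bound $1+V_n\le 2$, giving $|S_n|\le 2\sigma\sqrt{\log(\sqrt2/\delta)}$, which is the second branch. Dividing by $\kappa_n(y)$ gives \eqref{eq:lemma5}. The case $\kappa_n=0$, which includes $n=0$, is excluded or trivial, since the estimator is then undefined.

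The main obstacle is rigor in the conditioning step. The weights $w_i$ depend on the $y_i$, so the sum must be treated as a martingale with respect to the filtration generated by the $(y_j,c_j)_{j\le i}$ together with $y_{i+1}$. Within that filtration the $w_i$ are predictable and the $\eta_i$ are conditionally sub-Gaussian, and this is exactly the setting in which the self-normalized bound applies. One further point needs care. The confidence is pointwise in $y$: the statement is for a fixed query $y$, which is what the lemma asserts, so the query must be fixed before the data are seen.
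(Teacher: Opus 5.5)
Your proof is correct and follows essentially the same route as the paper: apply the self-normalized bound of Lemma~\ref{lemma:precursor} with the unnormalized weights $K_\lambda(y,y_i)\in[0,1]$, use $V_n\le\kappa_n(y)$, and split into the cases $\kappa_n(y)>1$ and $0<\kappa_n(y)\le 1$, simplifying each exactly as you do. You are more careful than the written proof on three points: you center the noise at $p_c(y_i)$ rather than $p_c(y)$; you treat the weights as predictable, so that conditional sub-Gaussianity suffices; and you take $\sigma=1/2$ from Hoeffding's lemma, which is the correct constant (the paper's $\sigma\le 1/4$ cannot hold, because the squared variance proxy $\sigma^2$ must be at least the variance, which equals $1/4$ when $p_c=1/2$).
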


This lemma provides a probabilistic bound on the sampling error that depends on the local density of samples through $\kappa_n(y)$. The bound becomes tighter in regions with more samples (larger $\kappa_n(y)$), reflecting the intuition that predictions are more reliable in data-dense regions. The two cases in the definition of $\alpha_n$ handle differently the scenarios of sparse and dense sampling.

\subsubsection{Combined bounds}

Correspondingly, we now formulate overall bounds on the prediction error by gathering the bounds from~\eqref{eq:overlapping},~\eqref{eq:margin}, and~\eqref{eq:alphadef}. 

\begin{corollary}
    Under the same assumptions as Lemma~\ref{lemma:sampling_error} and either Lemma~\ref{lemma:bias_lipschitz} or Lemma~\ref{lemma:bias_margin}, we have, with a probability of at least $1 - \delta$,
    \begin{align}
        \label{eq:overall-bounds}
        |p_c(y) - \hat{p}_c(y)| \leq \beta \lambda + 2 \sigma \frac{\alpha_n(y, \delta)}{\kappa_n(y)}, 
    \end{align}
    where $\alpha_n(y, \delta)$ is the data-dependent term from \eqref{eq:alphadef} and $\beta = L$ or $1/\gamma$ based on the nature of the data's underlying probability distribution. 
\end{corollary}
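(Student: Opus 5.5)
The corollary follows by assembling the two pieces of the decomposition in~\eqref{eq:bound-split}. Fix a class $c$, a query point $y \in \mathcal Y$ and a confidence level $\delta \in (0,1)$. The triangle inequality, exactly as in~\eqref{eq:bound-split}, gives
\begin{align*}
|p_c(y) - \hat p_c(y)| \leq |p_c(y) - \bar p_c(y)| + |\bar p_c(y) - \hat p_c(y)|.
\end{align*}
This inequality holds pointwise, for every realization of the data. The plan is therefore to bound the first term deterministically and the second term on a high-probability event, and then take the sum.

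For the bias term we split into the two cases allowed by the hypotheses. Under Assumptions~\ref{ass:lipschitz} and~\ref{ass:kernel}, Lemma~\ref{lemma:bias_lipschitz} gives $|p_c(y) - \bar p_c(y)| \leq L\lambda$ for every $y$ and every $n$. Under Assumptions~\ref{ass:separable} and~\ref{ass:kernel}, Lemma~\ref{lemma:bias_margin} gives the bound $\lambda/\gamma$ for almost every $y$. Writing $\beta = L$ in the first case and $\beta = 1/\gamma$ in the second gives the unified bound $\beta\lambda$. No probability is spent here: both lemmas are deterministic statements about the virtual estimate $\bar p_c$ (almost surely, in the separable case).

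For the sampling-error term, Lemma~\ref{lemma:sampling_error}, under Assumptions~\ref{ass:iid-assumption} and~\ref{ass:kernel}, supplies an event $E_\delta$ with $\mathcal P(E_\delta) \geq 1-\delta$. On $E_\delta$ we have $|\bar p_c(y) - \hat p_c(y)| \leq 2\sigma\,\alpha_n(y,\delta)/\kappa_n(y)$, where $\alpha_n$ is defined in~\eqref{eq:alphadef}. Intersecting $E_\delta$ with the probability-one event on which the bias bound holds leaves an event of probability still at least $1-\delta$. On this event both bounds hold simultaneously, and adding them yields~\eqref{eq:overall-bounds}.

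There is no real analytic obstacle. The only points needing care are bookkeeping. First, the guarantee is for a fixed $y$ and class $c$; if a uniform-in-$y$ or all-classes statement were wanted, one would need a union bound or a uniform version of Lemma~\ref{lemma:sampling_error}, which I would not attempt here. Second, the bound is only meaningful when $\kappa_n(y) > 0$, where $\hat p_c$ is defined; where $\kappa_n(y)=0$ the estimator is undefined and the statement is read as vacuous. Third, in the separable case the bias bound holds only for almost every $y$, which costs nothing in probability. With these conventions, the corollary is immediate.
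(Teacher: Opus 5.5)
Your proposal is correct and matches the paper's one-line proof: apply the triangle inequality \eqref{eq:bound-split}, take the deterministic bias bound $\beta\lambda$ from Lemma~\ref{lemma:bias_lipschitz} or~\ref{lemma:bias_margin}, add the $1-\delta$ sampling-error bound from Lemma~\ref{lemma:sampling_error}, and sum on that event. Your bookkeeping (fixed $y$ and $c$, $\kappa_n(y)>0$, intersecting with the almost-sure event on the samples) is more careful than the paper; the one imprecision is that in the separable case the ``almost every $y$'' qualifier does not cost ``nothing in probability'' but instead restricts which query points $y$ the bound covers, so the corollary should be stated for almost every $y$.
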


\begin{proof}
    This follows from applying the triangle inequality and collecting the bias term from either Lemma~\ref{lemma:bias_lipschitz} or~\ref{lemma:bias_margin}, and the sampling error from Lemma~\ref{lemma:sampling_error}. 
\end{proof}

\subsection{Computational efficiency improvements}
\label{sec:comp-eff-improvements}

The naive implementation of the proposed classifier scales with $\mathcal{O}(n)$, which is a significant improvement over the closest competing alternative by \cite{baumann2024SafeReinforcementLearning}, which scales with $\mathcal{O}(n^3)$. Nevertheless, we can further improve its efficiency by taking inspiration from $k$-nearest neighbour-based methods \citep{nnyaba2024EnhancingElectrocardiographyData} and implement a \textit{localized} variant. To this end, we employ k-d trees to facilitate the lookup of k-nearest neighbours. Building a k-d tree takes $\mathcal O (n \log n)$ time, and querying scales with $\mathcal O (k + \log n)$. Thus, for small $k$, we have approximately logarithmic scaling behaviour, while for increasing $k$, the outputs and the performance of the \textit{localized} classifier approach that of the \textit{regular} implementation.  


As an alternative, we implement a hash table variant of the proposed classifier in \eqref{eq:nwc} that performs lookup with $\mathcal O (\log n)$ complexity. We refer to this implementation in successive text as the \textit{dyadic} classifier. The construction of the hash table scales with $\mathcal O (n)$. However, this approach suffers from two crucial limitations. First, it does not allow for the efficient computation of bounds. This is because $\kappa_n(y)$ is a query-dependent value: it is the sum of weights calculated from the distances between a new query sample and the training samples. Range trees and hash tables are data structures built solely on training data; they cannot perform distance-based calculations relative to a new query sample $y$. Second, this approach suffers from the curse of dimensionality. For a user-defined resolution parameter $m$, the number of dyadic cells scales with increasing feature dimension $d$ in the order of $2^{m^d}$. 

\begin{table}[ht]
    \vspace{-1.5em}
    \centering
    \caption{Time complexity of different NWC implementations compared with CMEs and Logistic Regression.}
    \vspace{0.5em}
        \begin{tabular}{lcc}
            \toprule
            & Preprocessing & Querying \\
            \midrule
            \textbf{\textit{Regular}} & -- & $\mathcal{O}(n)$ \\
            \textbf{\textit{Dyadic}} & $\mathcal{O}(n)$ & $\mathcal{O}(\log n)$ \\
            \textbf{\textit{Localized}} & $\mathcal{O}(n \log n)$ & $\mathcal{O}(k + \log n)$ \\
            \textbf{CME} & -- & $\mathcal O (n^3)$\\
            \textbf{LR} & $\mathcal{O}(nd)$ & $\mathcal{O}(d)$\\
            \bottomrule
            \vspace{-1.5em}
        \end{tabular}
    \label{tab:nwc_time_complexity}
\end{table}

We compare the theoretical time complexities of the proposed classifier, its variants, and baseline methods Logistic Regression \citep{Shalev-Shwartz_Ben-David_2014} and CMEs \citep{baumann2024SafeReinforcementLearning} in Table~\ref{tab:nwc_time_complexity}.

\begin{remark}
    \cite{rao1996PAClearningfunctions} also present a variant of the NW estimator, improving from the naive $\mathcal O (n)$ to a faster $\mathcal O ((\log n)^d)$ implementation using range trees. With this data structure, the method enables the computation of the number of points within an arbitrary hyper-rectangle. However, this is misaligned with the objectives of classification; the estimation of any point within a cube is a pre-calculated aggregate of the training data that fell into said cube. Thus, the prediction problem is not one of searching in an arbitrary range; it is one of a direct lookup operation. These operations are better handled using hash tables. 
\end{remark}

\begin{remark}
    Since $K(v)\! \coloneqq \!0$ for $v \!\geq\! 1$ (Assumption~\ref{ass:kernel}), we can interpret the regular classifier as a local estimator whose locality is defined by the bandwidth parameter. The localized implementation is thus an alternative framing of the regular classifier, wherein we look up the $k$-nearest neighbours and compute their weights rather than iterating through the entire dataset to compute kernel-weights for samples within the bandwidth.
\end{remark}

\section{EXPERIMENTS}
\label{sec:experiments}

We implement and evaluate our proposed classifier and its variants against baseline methods in two settings: synthetically generated data to validate our theoretical assumptions, and a real-world electrocardiogram (ECG) dataset to demonstrate its practical utility in a safety-critical application. Additionally, we compare the methods on the MNIST dataset in Appendix~\ref{sec:mnist}. The code is available in the supplementary material.\blfootnote{ \href{https://github.com/shreeram-murali/AISTATS-2026}{github.com/shreeram-murali/AISTATS-2026}}

\paragraph{Synthetic data.} 
\label{sec:synthtic_data_para}


 We create two datasets, one Lipschitz continuous, and another separated by a margin. For the former, we define the true underlying probability function $p_c(y)$ using a logistic function with a direct relationship to $L$ (see Appendix~\ref{sec:synthetic_data_creation}). We sample points from the probability function such that it matches Assumption \ref{ass:lipschitz}. For the latter, we generate class centres spaced by a margin greater than $\gamma$ and sample points from a small cluster surrounding the centre. Figure \ref{fig:synthetic} shows the two datasets used in our results.

\begin{figure}[htpb]
    \centering
    \begin{minipage}[t]{0.9\textwidth}
    \includegraphics{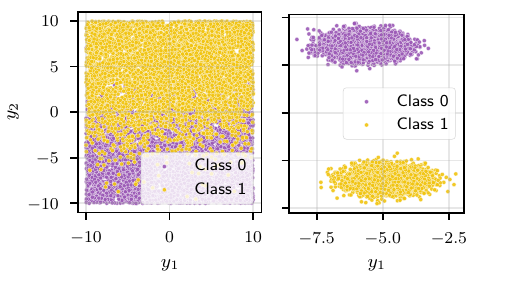}
    \label{fig:overlapping_synthetic}
    \end{minipage}
    \caption{The two synthetic datasets used for evaluating the classifier. \capt{One dataset with overlapping classes, adhering to Assumption~\ref{ass:lipschitz} (left), and one with classes separated by a known margin, in accordance with Assumption~\ref{ass:separable} (right).}}
    \label{fig:synthetic}
\end{figure}

\paragraph{Electrocardiographic data.}

We use the MIT-BIH Arrhythmia database, a widely-used benchmark in cardiology research \citep{moody2001impactMITBIHArrhythmia}.  The dataset contains approximately \SI{110000} annotated heartbeats from 48 half-hour recordings. Each heartbeat sample is represented by a 187-dimensional vector, which we truncated to contain the first 100 features to retain the most informative part of the QRS complex. We followed data preprocessing steps identical to those in prior work (e.g., \cite{nnyaba2024EnhancingElectrocardiographyData}). The dataset was split into \SI{87554} training and \SI{21892} testing samples.  Heartbeats are categorized into five classes\footnote{defined by the Association for the Advancement of Medical Instrumentation (AAMI) EC57 standard} as: Normal (N), Supraventricular ectopic (S), Ventricular ectopic (V), Fusion (F), and Unclassifiable (Q).  The dataset exhibits a significant class imbalance, which reflects real-world clinical scenarios.

\subsection{Results}
\label{sec:results}

\begin{figure*}[h!]
    \centering
    \begin{minipage}[t]{0.32\textwidth}
        \centering
        \resizebox{\textwidth}{!}{\input{figures/total_time_vs_training_size.pgf}}
    \end{minipage}
    \begin{minipage}[t]{0.32\textwidth}
        \centering
        \resizebox{\textwidth}{!}{\input{figures/runtime_vs_training_size.pgf}}
    \end{minipage}
    \begin{minipage}[t]{0.32\textwidth}
        \centering
        \resizebox{\textwidth}{!}{\input{figures/margin_fit_time_vs_training_size.pgf}}
    \end{minipage}
    \centering
    \begin{minipage}[t]{0.32\textwidth}
        \centering
        \resizebox{\textwidth}{!}{\input{figures/bounds_vs_training_size.pgf}}
    \end{minipage}
    \begin{minipage}[t]{0.33\textwidth}
        \centering
        \resizebox{\textwidth}{!}{\input{figures/accuracy_vs_training_size.pgf}}
    \end{minipage}
    \begin{minipage}[t]{0.33\textwidth}
        \centering
        \resizebox{\textwidth}{!}{\input{figures/margin_accuracy_training_size.pgf}}
    \end{minipage}
        \begin{minipage}[t]{\textwidth}
        \centering
        \vspace{-1.8em}
        \includegraphics[height=1.5cm]{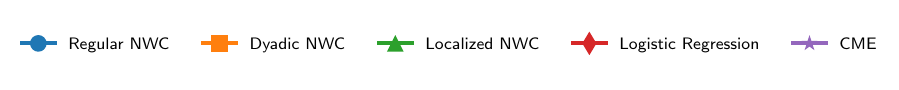}
    \end{minipage}
    \caption{Performance of the proposed classifiers on the synthetic datasets compared to baselines with varying sample sizes. \capt{We plot total runtime, prediction time, fit time (top row); average bounds $\bar{\epsilon}_c$ for $\delta=0.05$ for all classes, accuracy on the overlapping and separable dataset (bottom row). We observe that our algorithm is significantly more sample-efficient than the CME-based classifier, while achieving high accuracy with minimal uncertainty.}}
    \label{fig:synthetic_results}
    \vspace{-0.8em}
\end{figure*}

For the synthetic overlapping dataset, we set $L = 0.15$, and for the separable dataset, we set the margin as an equivalent $\gamma = 6.67$. For the \textit{regular} and \textit{localized} NWC variants, we use the Epanechnikov kernel, defined as 
\begin{align}
    \label{eq:ep_kernel}
    K(v) = 
    \begin{cases}
        1 - v^2, & \text{if } \|v\|  \leq 1, \\ 
        0, & \text{otherwise,}
    \end{cases}
\end{align}
with bandwidth $\lambda \!=\! 0.2$. In Appendix Section~\ref{sec:ablations} we provide an ablation study to find the best kernel and $\lambda$ that maximizes our classifier's accuracy and bounds. The results in Figure~\ref{fig:synthetic_results} depict the performance of the proposed and baseline methods. Here, we see that CMEs' computational costs were prohibitively expensive; we run the classifier only up to a maximum of \SI{10000} samples. The top row of figures shows the superior computational efficiency of the \textit{regular}, \textit{localized}, and \textit{dyadic} variants over CMEs. From the second row of Figure~\ref{fig:synthetic_results}, we see the uncertainty intervals tightening significantly with increasing $n$. Additionally, for accuracy, we observe that the computational efficiency of the proposed classifiers does not come at the cost of requiring larger training sets. 

Next, we summarize results from electrocardiographic data. On the MIT-BIH dataset, we set the Lipschitz constant to $L=0.05$ and the kernel bandwidth to $\lambda=0.75$ (see Appendix Section~\ref{sec:estimating-L}) with the Epanechnikov kernel. Figure~\ref{fig:ecg_results} shows the performance of the \textit{regular} and \textit{localized} classifier on this dataset, with illustrative examples of predictions, bounds, and the assigned class probabilities. When trained on the full dataset of over \SI{87554} samples, our \textit{regular} classifier achieves an accuracy of \SI{96.2}{\percent}, while the \textit{localized} one has an even higher accuracy of \SI{97.8}{\percent}. This comparison, along with precision-recall scores and runtimes, is shown in Table~\ref{tab:ecg_detailed_results} in a detailed evaluation of the classifiers with varying training set sizes. While CMEs provide tighter bounds at smaller set sizes, their cubic complexity makes them intractable for the full dataset. The proposed classifier, even with a naive implementation, was over 500 times faster to evaluate on the entire \SI{87554} sample strong dataset than CMEs were on just \SI{10000} samples. 


\begin{figure*}[h!]
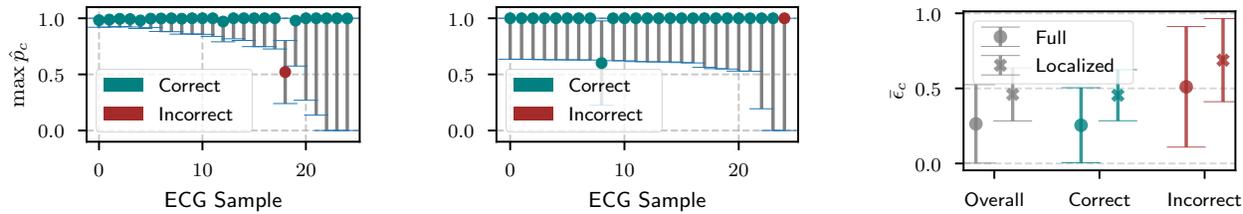
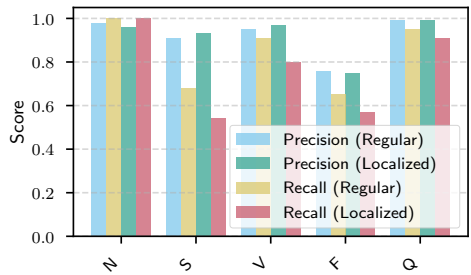
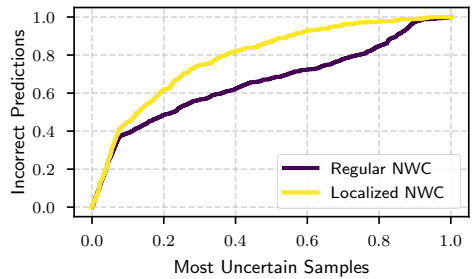

    \centering
    \begin{minipage}[t]{0.32\textwidth}
        \centering
        \subfigure[\textit{Regular} NWC: predicted probabilities and \SI{95}{\percent} bounds ($\delta=0.05$).]{%
            \resizebox{\textwidth}{!}{\input{figures/regular_smaller.pgf}}
            \label{fig:ecg_1}
        }
    \end{minipage}
    \hfill
    \begin{minipage}[t]{0.32\textwidth}
        \centering
        \subfigure[\textit{Localized} NWC: predicted probabilities and \SI{95}{\percent} bounds ($\delta=0.05$).]{%
            \resizebox{\textwidth}{!}{\input{figures/localized_predictions_smaller.pgf}}
            \label{fig:ecg_2}
        }
    \end{minipage}
    \hfill
    \begin{minipage}[t]{0.32\textwidth}
        \centering
        \subfigure[Mean bound width vs.\ predicted probability.]{%
            \resizebox{\textwidth}{!}{\input{figures/full_vs_localized_mean_bound_13.pgf}}
            \label{fig:ecg_3}
        }
    \end{minipage}

    \vspace{1em} 

    \begin{minipage}[t]{0.49\textwidth}
        \centering
        \subfigure[Precision-recall metrics for all classes and both proposed classifier variants: \textit{regular} and \textit{localized} NWC.]{%
            \resizebox{0.8\textwidth}{!}{\input{figures/precision_recall.pgf}}
            \label{fig:ecg_prec_rec}
        }
    \end{minipage}
    \hfill
    \begin{minipage}[t]{0.49\textwidth}
        \centering
        \subfigure[The cumulative recall curves. \capt{The curves show the proportion of total errors identified when samples are ranked by descending uncertainty.}]{%
            \resizebox{0.8\textwidth}{!}{\input{figures/crc_both.pgf}}
            \label{fig:crc}
        }
    \end{minipage}

    \caption{Mean uncertainty intervals, precision-recall metrics for the proposed classifiers, and waveforms. \capt{Our classifier shows higher uncertainty in misclassified labels, as well as high precision and recall scores.}}
    \label{fig:ecg_results}
\end{figure*}

\begin{table*}[h!]
    \centering
    \caption{Acccuracy and runtime comparison of the classifiers on the ECG dataset. \capt{We observe that both our approaches outperform logistic regression and the CME-based classifier in terms of accuracy, while being computationally orders of magnitude cheaper than the latter.}}
    \vspace{0.5em}
    \begin{tabular}{llccccc}
        \toprule
        \textbf{Size} & \textbf{Classifier} & \textbf{Accuracy} & \textbf{Precision} & \textbf{Recall} & \textbf{Precompute (s)} & \textbf{Query Time (s)} \\
        \midrule
        1\,000 & \textbf{\textit{Regular NWC}} & 86.4 & 0.881 & 0.864 & -- & 0.140 \\
        & \textbf{\textit{Localized NWC}} & 91.0 & \textbf{0.903} & 0.910 & \textbf{0.0008} & 0.085 \\
        & \textbf{Logistic Regression} & 88.8 & 0.859 & 0.888 & 0.0128 & \textbf{0.0002} \\
        & \textbf{CME} & \textbf{92.0} & 0.846 & \textbf{92.0} & -- & 8.883 \\
        \addlinespace
        10\,000 & \textbf{\textit{Regular NWC}} & 93.4 & 0.938 & 0.934 & -- & 1.573 \\
        & \textbf{\textit{Localized NWC}} & \textbf{96.4} & \textbf{0.963} & \textbf{0.964} & \textbf{0.0145} & 0.395 \\
        & \textbf{Logistic Regression} & 91.6 & 0.894 & 0.916 & 0.161 & \textbf{0.0002} \\
        & \textbf{CME} & 92.0 & 0.846 & 0.920 & -- & 8618.85 \\
        \addlinespace
        60\,000 & \textbf{\textit{Regular NWC}} & 95.6 & 0.958 & 0.956 & -- & 11.607 \\
        & \textbf{\textit{Localized NWC}} & \textbf{97.6} & \textbf{0.977} & \textbf{0.976} & \textbf{0.188} & 1.729 \\
        & \textbf{Logistic Regression} & 91.0 & 0.908 & 0.910 & 5.929 & \textbf{0.0003} \\
        & \textbf{CME} & -- & -- & -- & -- & -- \\
        \addlinespace
        87\,554 & \textbf{\textit{Regular NWC}} & 96.2 & 0.965 & 0.962 & -- & 16.905 \\
        & \textbf{\textit{Localized NWC}} & \textbf{97.8} & \textbf{0.979} & \textbf{0.978} & \textbf{0.318} & 1.691 \\
        & \textbf{Logistic Regression} & 91.0 & 0.903 & 0.910 & 6.527 & \textbf{0.0002} \\
        & \textbf{CME} & -- & -- & -- & -- & -- \\
        \bottomrule
    \end{tabular}
    \label{tab:ecg_detailed_results}
\end{table*}

Crucially, the uncertainty bounds provided by the proposed classifiers are actionable. As seen in Figure~\ref{fig:ecg_3}, incorrect predictions are typically associated with lower confidence (a lower probability estimate $\hat{p}_c$) or higher uncertainty. Correspondingly, in Figure~\ref{fig:crc} we show the cumulative recall curve; for instance, we observe that flagging the top \SI{10}{\percent} of the most uncertain predictions captures roughly \SI{40}{\percent} of the classifiers' incorrect predictions. This allows for a system where uncertain predictions can be automatically flagged for manual review by a clinician, a critical feature for deployment in healthcare. The \textit{localized} NWC emerges as the most accurate and computationally efficient method, while the \textit{regular} NWC provides stronger theoretical bounds. Both variants significantly outperform the baselines in achieving a practical balance of accuracy, efficiency, and reliability.

Each experiment in this section addresses a specific case: the synthetic data allows for the creation of datasets with known Lipschitz-continuity and margins, and the ECG data highlights the NW classifiers' core competence in balancing accuracy, reliability, and computational efficiency. The proposed classifier successfully bridges the gap between efficient but non-guaranteed methods like Logistic Regression, and methods with formal guarantees like CMEs, which are computationally prohibitive \citep{baumann2024SafeReinforcementLearning}. Our proposed variants---\textit{localized} and \textit{dyadic} implementations---further offer a tunable trade-off: they provide higher computational efficiency at the cost of more conservative bounds. In regulated industries like healthcare, where diagnosticians often manually review data, it is desirable to have methods that are accurate and also express uncertainty. Here, the role of bounds becomes clearer: measurements can thus be flagged for manual review based on the strength of their bounds.  


\section{RELATED WORK}
\label{sec:discussion-related-work}

In this section, we contextualize our contributions and position our work against the existing literature in uncertainty quantification, non-parametric methods, and previous work on ECG heartbeat classification. 

\paragraph{Frequentist uncertainty quantification.}
Many contemporary uncertainty quantification techniques, particularly in deep learning, rely on empirical methods like Monte-Carlo Dropout or deep ensembles \citep{pmlr-v48-gal16-anotherdropout, aiaa-bilal}. These approaches are empirical and lack the formal, high-probability error guarantees necessary for high-stakes decision-making. On the other hand, while Bayesian methods offer a more principled method for uncertainty quantification, they are non-frequentist, and are computationally and analytically intractable \citep{rasmussen2008Gaussianprocessesmachine, villacampa-calvo2021MulticlassGaussianProcess}. The most direct frequentist predecessor to our work, the CME-based classifier by \cite{baumann2024SafeReinforcementLearning}, provides the desired formal guarantees but suffers from the same prohibitive $\mathcal{O}(n^3)$ complexity due to its reliance on matrix inversion. Our work directly addresses this limitation.

\vspace{-0.5em}

\paragraph{Non-parametric kernel regression.}
Our choice of the Nadaraya-Watson estimator is motivated by its history as a non-parametric regressor \citep{nadaraya-1964, watson1964SmoothRegressionAnalysis}. It has typically found its usage in statistical applications \citep{schuster1979ContributionsTheoryNonparametric, nadaraya1989NonparametricEstimationProbability, blsprao}. Moreover, the estimator has also been applied to system identification problems in control theory \citep{schuster1979ContributionsTheoryNonparametric, juditsky1995Nonlinearblackboxmodels, ljung2006ASPECTSNONLINEARSYSTEM, mzyk2020Wienersystemidentification}. Nevertheless, the most relevant usage of the NW estimator to this paper's contributions has been in the safe learning literature as a computationally efficient alternative to GPs \citep{baumann2024computationallylightweightsafe}. It has been successfully used to provide safety and optimality guarantees in regression settings for reinforcement learning and control \citep{kowalczyk2024KernelBasedLearningGuarantees, baumann2025SafetyOptimalityLearningBased}. Nevertheless, we are not aware of any previous work that reformulates the NW estimator as a multi-class classifier and derives frequentist uncertainty bounds on its class probability estimates. In doing so, we inherit the computational efficiency of the estimator and couple it with the formal guarantees typically found in the domain of computationally expensive methods. 

\vspace{-0.5em}

\paragraph{Approaches to arrhythmia detection.}
The implications of our work are most evident in the context of our ECG experiments. Classification in the context of arrhythmia detection spans a large class of problems: some methods aim to classify strips of rhythms \citep{heden_detection_1996}, comprising multiple heartbeats, while others aim to classify individual beats themselves \citep{kachuee2018ECGHeartbeatClassification}. Many classical and deep learning approaches have been successful in yielding highly accurate predictions on the MIT-BIH dataset, ranging from \SI{95.9}{\percent} to \SI{99.87}{\percent} \citep{kumari2022ClassificationECGbeatsa, zhou2024MultimodalECGheartbeat, abdalla2020Deepconvolutionalneural, gao2019EffectiveLSTMRecurrent, jha2020Cardiacarrhythmiaclassification}, but these approaches often disregard the uncertainty associated with predictions (see Appendix~\ref{sec:baselines}). Some recent studies have attempted to quantify uncertainty through Monte Carlo dropout simulations \citep{zhang2022DeepBayesianNeural} and variational encoders \citep{barandas2024Evaluationuncertaintyquantification}. The most relevant contribution by \cite{nnyaba2024EnhancingElectrocardiographyData} utilized a $k$-nearest neighbours based GP classification method \citep{muyskens2021MuyGPsScalableGaussian}. While \cite{nnyaba2024EnhancingElectrocardiographyData} demonstrate strong results on the ECG dataset, their bounds are less rigorous, non-frequentist, and the classifier scales cubically with the number of neighbours $k$ considered. 

\section{CONCLUSIONS}


We have presented a novel classification algorithm that reformulates the Nadaraya-Watson estimator for multi-class classification tasks. Our work jointly addresses three challenges in modern machine learning: computational efficiency, theoretical guarantees, and practical applicability in safety-critical domains. The classifier achieves linear time complexity, $\mathcal{O}(n)$, a significant improvement over the cubic scaling of existing methods that provide formal guarantees. This efficiency does not come at the cost of reliability; we derive rigorous frequentist bounds on prediction errors, providing guarantees for both overlapping and separable data distributions.

We complement the theoretical contributions with practical implementations that further enhance computational efficiency. Our localized variant, leveraging k-d trees, achieves sublinear time complexity while maintaining high accuracy, reaching \SI{97.8}{\percent} on real-world ECG data, though with more conservative bounds. The \textit{dyadic} implementation offers even faster lookups at $\mathcal{O}(\log n)$, however, without the ability to specify kernels or compute bounds. These variants provide practitioners with flexible options to balance computational resources against uncertainty quantification needs.

\paragraph{Future work.} Despite promising results, our approach has limitations suggesting directions for future research. A primary challenge is estimating the Lipschitz constant from data, which can be non-trivial \citep{wood1996EstimationLipschitzconstant, tokmak2025safeexplorationreproducingkernel}. Another challenge would be to overcome the expressivity of Euclidean distances in higher dimensions. Future work could also explore special cases---for instance, ordinal and sequential classification---to tighten theoretical bounds, enhancing the classifier's utility.

\section*{Acknowledgements}

This research was partially supported by the Research Council of Finland flagship programme: the Finnish Center for Artificial Intelligence (FCAI), the Tandem Industry Academia Seed funding from the Finnish Research Impact Foundation, the Swedish Research Council under contract number 2023-05170, the Wallenberg AI, Autonomous Systems and Software Program (WASP), and the Swedish Civil Defence and Resilience Agency (Project MAD-VAMCHS). We also acknowledge the computational resources provided by the Aalto Science--IT project.


\bibliographystyle{apalike}
\bibliography{ref}


\newpage

\section*{Checklist}



\begin{enumerate}

  \item For all models and algorithms presented, check if you include:
  \begin{enumerate}
    \item A clear description of the mathematical setting, assumptions, algorithm, and/or model. [Yes]
    \item An analysis of the properties and complexity (time, space, sample size) of any algorithm. [Yes]
    \item (Optional) Anonymized source code, with specification of all dependencies, including external libraries. [Yes]
  \end{enumerate}

  \item For any theoretical claim, check if you include:
  \begin{enumerate}
    \item Statements of the full set of assumptions of all theoretical results. [Yes]
    \item Complete proofs of all theoretical results. [Yes]
    \item Clear explanations of any assumptions. [Yes]     
  \end{enumerate}

  \item For all figures and tables that present empirical results, check if you include:
  \begin{enumerate}
    \item The code, data, and instructions needed to reproduce the main experimental results (either in the supplemental material or as a URL). [Yes]
    \item All the training details (e.g., data splits, hyperparameters, how they were chosen). [Yes]
    \item A clear definition of the specific measure or statistics and error bars (e.g., with respect to the random seed after running experiments multiple times). [Yes]
    \item A description of the computing infrastructure used. (e.g., type of GPUs, internal cluster, or cloud provider). [Not Applicable]
  \end{enumerate}

  \item If you are using existing assets (e.g., code, data, models) or curating/releasing new assets, check if you include:
  \begin{enumerate}
    \item Citations of the creator If your work uses existing assets. [Yes]
    \item The license information of the assets, if applicable. [Not Applicable]
    \item New assets either in the supplemental material or as a URL, if applicable. [Yes]
    \item Information about consent from data providers/curators. [Not Applicable]
    \item Discussion of sensible content if applicable, e.g., personally identifiable information or offensive content. [Not Applicable]
  \end{enumerate}

  \item If you used crowdsourcing or conducted research with human subjects, check if you include:
  \begin{enumerate}
    \item The full text of instructions given to participants and screenshots. [Not Applicable]
    \item Descriptions of potential participant risks, with links to Institutional Review Board (IRB) approvals if applicable. [Not Applicable]
    \item The estimated hourly wage paid to participants and the total amount spent on participant compensation. [Not Applicable]
  \end{enumerate}

\end{enumerate}

\clearpage
\appendix
\startcontents[appendices]
\thispagestyle{empty}

\onecolumn

\aistatstitle{Computationally lightweight classifiers\\ with frequentist bounds on predictions: Appendix}

\printcontents[appendices]{}{1}{\section*{Contents}}

\clearpage

\section{BOUNDS ON THE BIAS}
\label{sec:proofs}

In this section, we restate and prove Lemmas \ref{lemma:bias_lipschitz}, \ref{lemma:bias_margin}, and \ref{lemma:infinite_support_bias} from Section~\ref{sec:bounds-deriving}.  

\subsection{Proof of Lemma \ref{lemma:bias_lipschitz}}
\label{sec:proof1}

\lemmaone*
\begin{proof}
\label{proof:lipschitz}
For notational convenience, we denote the kernel weight of the estimator at each iteration, ${K_{\lambda}(y, y_i)}/{\kappa_n(y)}$, from \eqref{eq:nwc} by $\theta_i$. Notably, these weights sum to $1$. The virtual estimate $\bar{p}_c(y)$ in the bias term can be represented as the weighted sum of the true probability at each observation:
\begin{align}
\label{eq:weights-nwc}
    |p_c(y) - \bar{p}_c(y)| = \left| \, p_c(y) - \sum_{t=1}^n \theta_i p_c(y_i) \, \right|.
\end{align}
Furthermore, since the weights sum to $1$, the term $p_c(y)$ can be introduced into a summation term without altering its value:
\begin{align}
\label{eq:weights-constant}
    p_c(y) = p_c(y) \sum_{i=1}^n \theta_i = \sum_{i=1}^n \theta_i p_c(y).
\end{align}
Equation \eqref{eq:weights-constant} can be used to rewrite and simplify \eqref{eq:weights-nwc} as
\begin{align}
    |p_c(y) - \bar{p}_c(y)| &= \left| \, \sum_{i=1}^n \theta_i p_c(y) - \sum_{i=1}^n \theta_i p_c(y_i) \, \right|  \nonumber \\
    &= \left| \, \sum_{i=1}^n \theta_i (p_c(y) - p_c(y_i)) \, \right|.  \nonumber
\end{align}
By the Cauchy-Schwarz inequality, 
\begin{align}
\label{eq:weights-triangle}
    \left| \, \sum_{i=1}^n \theta_i (p_c(y) - p_c(y_i)) \, \right| \leq  \sum_{i=1}^n \theta_i \left| p_c(y) - p_c(y_i) \right|.
\end{align}

However, under Assumption \ref{ass:lipschitz}, the term $|p_c(y) - p_c(y_i)|$ is bounded by a known, finite Lipschitz constant $L$ as 
\begin{align}
\label{eq:bias1-lipschitz}
    \left| p_c(y) - p_c(y_i) \right| \leq L\|y - y_i\|. 
\end{align}
Furthermore, due to Assumption \ref{ass:kernel}, if $K(y, y_i) \geq 0$, then $\theta_i \geq 0$, and $\|y - y_i\| \leq \lambda$. Therefore, by combining \eqref{eq:weights-triangle} and \eqref{eq:bias1-lipschitz}, we obtain
\begin{align}
\label{eq:l-lambda}
    |p_c(y) - \bar{p}_c(y)| & \leq \sum_{i=1}^n \theta_i \left| p_c(y) - p_c(y_i) \right| \nonumber \\
    & \leq \sum_{i=1}^n \theta_i L \|y - y_i\| \leq L \lambda.
\end{align}
\end{proof}

\subsection{Proof of Lemma \ref{lemma:bias_margin}}

\lemmatwo*
\begin{proof}
    \label{proof:bias_margin}
Due to Assumption \ref{ass:separable}, $p_c(y)$ is equivalent to a function $f: \mathcal Y \to \mathcal [0,1]$ with Lipschitz constant $1/\gamma$ for almost all $y \in \mathcal Y$. This would imply that $\mathbbm 1_{c_i} = f(y_i)$ with probability $1$. For example, if we define a set $\mathcal R$ 
\begin{align}
    \mathcal R \coloneqq \{ y \in \mathcal Y : p_c(y) = 0 \},  \nonumber
\end{align}
we can choose $f$ as
\begin{align}
    f(y) = \min \left\{  1, \frac{1}{\gamma} \inf_{y' \text{ in } \mathcal R}  \|y - y'\|\right\}. \nonumber
\end{align}
From here, the proof proceeds identically as that of Lemma~\ref{lemma:bias_lipschitz} from Appendix Section \ref{sec:proof1}. The terms $p_c(y)$ from \eqref{eq:bias1-lipschitz} could be replaced with $f(y)$, following which the result from \eqref{eq:l-lambda} would remain valid for almost all $y \in \mathcal Y$. 
\end{proof}

\subsection{Extension to positive definite kernels with infinite support}
\label{sec:extended_kernel_proof}

Assumption~\ref{ass:kernel} limits kernel choice to those with support only in $[-1, 1]$. While this can easily be accomplished for any kernel by truncating its outputs to zero outside the interval $[-1, 1]$, this aspect of Assumption~\ref{ass:kernel} can be relaxed by assuming bounded inputs. In this case, we can extend our results to positive definite kernels with infinite support. Doing so would conservatively expand Lemmas~\ref{lemma:bias_lipschitz} and~\ref{lemma:bias_margin} to accommodate positive definite kernels with infinite support.

\lemmatwopointfive*
\begin{proof}
    We introduce a cut-off radius $r^\ast$ such that and use the Cauchy-Schwarz inequality (similar to~\eqref{eq:l-lambda}) to split $|p_c(y) - \bar{p}_c(y)|$ into two terms, with indices in $\mathcal I _{\mathrm{near}}$ for instances where $\| y - y_i \| \le \lambda^\ast$, and $\mathcal I _{\mathrm{far}}$ corresponding to indices where $\| y - y_i \| > \lambda^\ast$, where $r ^ \ast = \lambda ^ \ast / \lambda$. We can write that split as 
    \begin{align}
        \label{eq:weights-triangle-infinite-support}
        |p_c(y) - \bar{p}_c (y)| \leq \beta \sum_{i \in \mathcal{I}_{\mathrm{near}}} \theta_i \| y - y_i \|  + \beta \sum_{i \in \mathcal{I}_{\mathrm{far}}} \theta_i \| y - y_i \|.
    \end{align}

    Since the kernel weights sum to 1, samples in the near set $\mathcal I_{\mathrm{near}}$ are bounded by $\lambda^\ast$ as
    \begin{align}
        \beta \sum_{i \in \mathcal I _{\mathrm{near}}} \theta_i \| y - y_i \| \le \beta  \lambda^\ast \sum_{i \in \mathcal I_{\mathrm{near}} } \theta_i \le \beta \lambda ^\ast. 
    \end{align}

    For samples in the $\mathcal I _{\mathrm{far}}$ set, the distances and weights can be large, but they are capped by Assumption~\ref{ass:bounded_input}:
    \begin{align}
        \label{eq:capped-weights}
        \sum_{i \in \mathcal{I}_{\mathrm{far}}} \theta_i \| y - y_i \| \le \Phi  \sum_{i \in \mathcal{I}_{\mathrm{far}}} \theta_i.
    \end{align}
    We set $\varepsilon_t \coloneqq \sum_{i \in \mathcal{I}_{\mathrm{far}}} \theta_i$, to obtain the bounds 
    \begin{align}
        \label{eq:extended-bias}
        |p_c(y) - \bar{p}_c(y)| \leq \beta \lambda^\ast + \beta \Phi \varepsilon_t, 
    \end{align}
    where $\beta = L$ or $1/\gamma$. 
\end{proof}

The choice of kernel heavily influences the practical utility of the bounds from~\eqref{eq:extended-bias}. For a standard RBF kernel, setting $r^\ast \coloneqq 3$ captures \SI{99.7}{\percent} of the kernel's mass. Yet, in large datasets, the number of points in the $\mathcal{I}_{\mathrm{far}}$ set vastly outnumber those in the $\mathcal{I}_{\mathrm{near}}$ set. This would cause the tail term $\beta \Phi \varepsilon_t$ to blow up, resulting in the bounds being unreasonably conservative. 

\section{BOUNDS ON THE SAMPLING ERROR}
\label{sec:bounds_sampling_error}

In this section, we restate and prove Lemma~\ref{lemma:sampling_error} from Section~\ref{sec:sampling_error}.

\subsection{Concentration inequality for self-normalized sums with sub-Gaussian noise}

Here, we restate a result from \cite{baumann2024computationallylightweightsafe, baumann2025SafetyOptimalityLearningBased} which we use to prove Lemma \ref{lemma:sampling_error}. This lemma introduces a concentration inequality for self-normalized sums with sub-Gaussian noise \cite[Thm~3]{abbasiyadkori2011onlinesquaresestimationselfnormalized}. 

\begin{restatable}[\cite{baumann2024computationallylightweightsafe, baumann2025SafetyOptimalityLearningBased}]{lemma}{lemmafour}
\label{lemma:precursor}
Let $\{v_t : t \in \mathbb N \}$ be a bounded stochastic process and $\{ \omega_t : t \in \mathbb N \}$ be an i.i.d.\ sub-Gaussian stochastic process, i.e., there exists a $\sigma > 0$ such that, for any $\rho \in \mathbb R$, and any $t \in \mathbb N$, 
\begin{align}
    \mathbb E [\exp(\rho \omega_t)] \leq \exp \left( \frac{\rho^2 \sigma^2}{2} \right). \nonumber
\end{align}
Further, let $S_n \coloneqq \sum_{t = 1}^n v_t \omega_t $ and $V_n \coloneqq \sum_{t=1}^n v_t^2$. Then for any $n \in \mathbb N$ and $0 < \delta < 1$, with probability at least $1- \delta$, 
\begin{align}
    |S_n|  \leq \sqrt{2 \sigma^2 \log(\delta^{-1} \sqrt{1 + V_n})(1 + V_n)}. 
\end{align}
\end{restatable}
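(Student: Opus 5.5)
The plan is to follow the method-of-mixtures argument of \cite[Thm~3]{abbasiyadkori2011onlinesquaresestimationselfnormalized}, specialised to the scalar case. First I fix the filtration. Let $\mathcal F_t$ be the $\sigma$-algebra generated by $v_1,\dots,v_{t+1}$ and $\omega_1,\dots,\omega_t$. I will assume that $v_t$ is $\mathcal F_{t-1}$-measurable and that $\omega_t$ is conditionally sub-Gaussian given $\mathcal F_{t-1}$ with the same $\sigma$. This is how the hypotheses must be read for the lemma to hold. In our application it is automatic: the weights $v_t$ are deterministic given the inputs, and the noise $\omega_t$ is independent of them.

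\textbf{Step 1 (a supermartingale).} For each fixed $\rho\in\mathbb R$ define
\begin{align*}
M_t(\rho) \coloneqq \exp\!\Big(\frac{\rho S_t}{\sigma} - \frac{\rho^2 V_t}{2}\Big).
\end{align*}
Conditioning on $\mathcal F_{t-1}$ and applying the sub-Gaussian bound with parameter $\rho v_t/\sigma$ gives
\begin{align*}
\mathbb E\big[\exp(\rho v_t\omega_t/\sigma)\mid\mathcal F_{t-1}\big]\le \exp(\rho^2 v_t^2/2).
\end{align*}
Hence $\mathbb E[M_t(\rho)\mid \mathcal F_{t-1}]\le M_{t-1}(\rho)$, and by induction $\mathbb E[M_n(\rho)]\le 1$. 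Boundedness of $v_t$ guarantees that all these expectations are finite.

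\textbf{Step 2 (mixing over $\rho$).} Let $\phi$ be the standard normal density and set $\bar M_n\coloneqq\int_{\mathbb R} M_n(\rho)\phi(\rho)\,d\rho$. By Tonelli's theorem (the integrand is nonnegative), $\mathbb E[\bar M_n]\le 1$. Completing the square with $a=S_n/\sigma$ computes the Gaussian integral explicitly:
\begin{align*}
\bar M_n=\frac{1}{\sqrt{1+V_n}}\exp\!\Big(\frac{S_n^2}{2\sigma^2(1+V_n)}\Big).
\end{align*}

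\textbf{Step 3 (Markov and rearranging).} Markov's inequality gives $\mathbb P(\bar M_n\ge 1/\delta)\le\delta$. So with probability at least $1-\delta$,
\begin{align*}
\frac{S_n^2}{2\sigma^2(1+V_n)}\le \log\!\big(\delta^{-1}\sqrt{1+V_n}\big).
\end{align*}
Taking square roots yields the claimed bound on $|S_n|$. Because $n$ is fixed, no stopping-time argument is needed, although the same reasoning would give a uniform-in-$n$ statement via Ville's inequality.

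The main obstacle is Step 1: justifying the conditional sub-Gaussian bound when the $v_t$ are random. I would handle it through the predictability and independence assumption stated at the start. The Gaussian integral in Step 2 is routine.
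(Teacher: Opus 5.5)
Your proposal is correct and follows essentially the same route as the paper. The paper gives no proof of its own; it imports the lemma from \cite{baumann2024computationallylightweightsafe, baumann2025SafetyOptimalityLearningBased}, which rest on the Gaussian method-of-mixtures bound of \cite[Thm~3]{abbasiyadkori2011onlinesquaresestimationselfnormalized}, and you reproduce that argument in the scalar, fixed-$n$ case. Your insistence that $v_t$ be predictable and $\omega_t$ conditionally sub-Gaussian is a necessary repair of the statement as written rather than an extra assumption: for example, $v_t=\mathrm{sign}(\omega_t)$ with standard Gaussian $\omega_t$ makes $S_n$ grow linearly in $n$ and violates the bound. It also matches the hypotheses of the original theorem.
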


Correspondingly, we derive bounds on the sampling error. In this section, we utilize Lemma~\ref{lemma:precursor} to prove Lemma~\ref{lemma:sampling_error}. 

\subsection{Proof of Lemma \ref{lemma:sampling_error}}

\lemmathree*
\begin{proof}
\label{proof:data-term}
Using the same notation for normalized weights ($\theta_i$) as Appendix \ref{sec:proof1}, the difference between the virtual estimate $\bar{p}_c(y)$ and the actual estimate $\hat{p}_c(y)$ can be expressed as 
\begin{align}
    \label{sq:ster1}
    |\bar{p}_c(y) - \hat{p}_c(y)| = \left| \, \sum_{i=1}^n \theta_i (p_c(y) - \mathbbm 1_{c_i}(c)) \, \right|. 
\end{align}
The first term $ \sum_{i=1}^n \theta_i p_c(y)$ is the virtual estimate the classifier in \eqref{eq:nwc} would produce if it had true probability labels rather than discrete labels. Here, for one-hot vectorized class label representations $\mathbbm 1 _{c_i} \!\coloneqq\! \mathbbm 1 \{ c_i\! =\! c \}$,  $\mathbbm 1_{c_i}(c)$ represents the indicator function value at class index $c$. For convenience in notations, we denote the error between true and discrete labels $p_c(y) - \mathbbm 1_{c_i}(c)$ as $\varepsilon_i$.  

Since $c$ is a Bernoulli random variable with success probability $q_c$, the random variable $q_c - c$ is $\sigma$-sub-Gaussian with $\sigma \leq 1/4$ \cite[Thm~2.1 and Lemma~2.1]{buldygin2013subGaussiannormbinary}. Expanding this scalar result into the $\mathbb R^d$ problem setting, the context label $c$ is derived from the indicator function $\mathbbm 1_{c_i}(c)$ and the success probabilities $p_c(y)$. Therefore, the term $\varepsilon_i$ can be conceived of as an i.i.d.\ noise term that is $\sigma$-sub-Gaussian with $\sigma \leq 1/4$. With this observation, we can frame $\theta_i$ as a stochastic process since its values are always lesser than $1$, and $\varepsilon_i$ can be conceptualized as an i.i.d.\ $\sigma$-sub-Gaussian stochastic process. 

Consequently, the term $\sum_{i=1}^n \theta_i \varepsilon_i$ can be likened to $S_n$ from Lemma \ref{lemma:precursor} and $\sum_{i=1}^n \theta_i ^2$ to $V_n$. 

Expanding the $\theta_i$ notation out into its original form, we see that the term
\begin{align}
    \left| \sum_{i=1}^n \theta_i \varepsilon_i \right|  = \frac{1}{\kappa_n(y)} \left| \sum_{i=1}^n K_\lambda ^2 (y, y_i) \varepsilon_i  \right| \nonumber
\end{align}
is upper bounded with probability at least $1 - \delta$ by 
\begin{align}
    \label{eq:sterbound1}
    \frac{\sigma}{\kappa_n(y)} \sqrt{2 \log \left( \frac{1}{\delta}\sqrt{1 + \sum_{i=1}^n K_\lambda ^2 (y, y_i)} \right) \left( 1 + \sum_{i=1}^n K_\lambda ^2 (y, y_i)  \right)}.
\end{align}
Furthermore, since $K_\lambda (y, y_i) \leq 1$ (see  Assumption \ref{ass:kernel}), it follows that $K_\lambda ^2 (y, y_i) \leq K_\lambda (y, y_i)$. This allows us to upper bound the summation term as
\begin{align}
    \sum_{i = 1}^n K_\lambda ^2 (y, y_i) \leq \sum_{i=1}^n K_\lambda (y, y_i). \nonumber
\end{align}
The term on the right is, by definition, $\kappa_n(y)$. Substituting this into the bound, we obtain
\begin{align}
    \frac{1}{\kappa_n(y)} \left| \sum_{i=1}^n K_\lambda ^2 (y, y_i) \varepsilon_i  \right| \leq \sigma \sqrt{2 \log (\delta^{-1} \sqrt{1 + \kappa_n(y)}} \frac{\sqrt{1 + \kappa_n(y)}}{\kappa_n(y)}. \nonumber
\end{align}
Additionally, we can observe that if $\kappa_n(y) > 1$, then
\begin{align}
    \frac{\sqrt{1 + \kappa_n(y)}}{\kappa_n(y)} < \frac{\sqrt{2 \kappa_n(y)}}{\kappa_n(y)} = \frac{\sqrt 2}{\sqrt{\kappa_n(y)}}.  \nonumber
\end{align}
Therefore, with probability at least $1- \delta$, for $\kappa_n(y) > 1$, 
\begin{align}
    \label{eq:sterfinalone}
    \frac{1}{\kappa_n(y)} \left| \sum_{i=1}^n K_\lambda ^2 (y, y_i) \varepsilon_i  \right| \leq \frac{2\sigma}{\kappa_n(y)} \sqrt{\kappa_n(y) \log(\delta^{-1}\sqrt{1 + \kappa_n(y)})},
\end{align}
and for $0 < \kappa_n(y) \leq 1$, 
\begin{align}
    \label{eq:sterfinaltwo}
    \frac{1}{\kappa_n(y)} \left| \sum_{i=1}^n K_\lambda ^2 (y, y_i) \varepsilon_i  \right| & \leq  \sigma \sqrt{2 \log \left(\delta^{-1} \sqrt{1 + \kappa_n(y)}\right)} \frac{\sqrt{1 + \kappa_n(y)}}{\kappa_n(y)}.
\end{align}
We can see here that if $\kappa_n(y) \leq 1$, then the term $\sqrt{1 + \kappa_n(y)} \leq \sqrt 2$. By taking the worst-case bounds, we can thus simplify the term in \eqref{eq:sterfinaltwo} to 
\begin{align}
    \label{eq:sterfinalthree}
    \sigma \sqrt{2 \log \left(\delta^{-1} \sqrt{1 + \kappa_n(y)}\right)} \frac{\sqrt{1 + \kappa_n(y)}}{\kappa_n(y)} \leq  \frac{2 \sigma}{\kappa_n(y)} \sqrt{\log \left( \frac{\sqrt 2}{\delta} \right)}.
\end{align}
By collecting the terms from \eqref{eq:sterfinalone} and \eqref{eq:sterfinalthree}, we arrive at 
\begin{align}
    \label{eq:sterfinalfinal}
    |\bar{p}_c(y) - \hat{p}_c(y)| \leq 
    \begin{cases}
        \frac{2\sigma}{\kappa_n(y)} \sqrt{\kappa_n(y) \log(\delta^{-1}\sqrt{1 + \kappa_n(y)})} & \qquad \text{if } \kappa_n(y) > 1, \\ 
        \frac{2 \sigma}{\kappa_n(y)} \sqrt{\log \left( \frac{\sqrt 2}{\delta} \right)} & \qquad \text{if } 0 < \kappa_n(y) \leq 1.
    \end{cases}
\end{align}
\end{proof}

\section{LIPSCHITZ-CONTINUOUS SYNTHETIC DATASET GENERATION}
\label{sec:synthetic_data_creation}

In this section, we provide details on how we generated synthetic data (see Section \ref{sec:synthtic_data_para}) to match Assumption~\ref{ass:lipschitz}. We define an underlying probability function $p_c(y, L)$ that is a function of a known Lipschitz constant $L$. Instead of generating hard-labelled measurements, we sample data points according to this function. This is easy to achieve in a case of binary classification, where the relationship between probabilities is exact. That is, for two classes, we have a single separating hyperplane. The probability for a sample $y$ can be modelled using the logistic (sigmoid) function, which takes the signed distance to the hyperplane as input. The probability  $p_c$ for class $c=1$ would be
\begin{align}
    \label{eq:logistic}
    p_c(y) = \frac{1}{1 + \exp(-k (wy + b))}, 
\end{align}
where $w$ is the normal vector to the hyperplane with $\| w \| = 1$ and $k$ is a scaling factor that controls the steepness of the probability function. The gradient of this function, 
\begin{align}
    \nabla p_1(y) = k \cdot p_1(y) \cdot (1 - p_1(y)) \cdot w, 
 \end{align}
is maximized at $p_1(y) = 0.5$, which leads to a direct relationship that $L = k/4$. 

\section{ESTIMATING A LIPSCHITZ CONSTANT FROM DATA}
\label{sec:estimating-L}

The Lipschitz constant $L$ in Assumption \ref{ass:lipschitz} and the equivalent margin in Assumption~\ref{ass:separable} are parameters that convey the underlying assumptions about the smoothness of the data. It influences the strength of the classifier's bounds. However, since the underlying probability function $p_c(y)$ cannot be measured, we must estimate this constant from data. Correspondingly, it is also important to first determine if the dataset falls into the category of overlapping or separable distributions. 

To approximate the Lipschitz constant from data, the maximum pairwise distance forms a basis for an upper bound. Here, we assume that for two samples $y$ and $y'$ with matching labels that are the closest to each other in the entire dataset, the true probability $p_c(y)$ and $p_c(y')$ do not differ by more than a threshold probability $P_t$ for any $c \in \mathcal C$. Then, we can estimate the Lipschitz constant from \eqref{eq:lipschitz} as 
\begin{align}
    \label{eq:l-estimation}
    L \geq \frac{P_t}{\sup \{ \| y - y' \|  \}}. 
\end{align}
Lastly, to determine the nature of the data distribution, we selected \SI{1000} instances from the MNIST and MIT-BIH dataset with random stratified sampling. We computed the pairwise distances from each sample to all other samples in this set. We observed that the maximum pairwise distance within a class was comparable to the maximum pairwise distance observed globally. For both datasets, this implies that their distributions are overlapping, rather than being separated by a margin. Additionally, in Figure~\ref{fig:kernels_hyperopt}, we show $K(v)$ for all pairs of a random selection of \SI{100} samples from the MNIST dataset for its optimized bandwidth $\lambda=7.5$. 

In Remark~\ref{remark:lipschitz} (Section~\ref{sec:problem}), we introduce various approaches in the literature that have been used to estimate Lipschitz constants from measurements. Most methods are heuristic, including the one we use based on \cite{Strongin1973}. Estimating an upper bound on the smoothness of a function with guarantees is intractable without invoking further regularity assumptions. This is done so in the form of assuming bounded higher-order derivatives \citep{huangSampleComplexityLipschitz}, or by invoking sampling assumptions about the measurements \cite{tokmak2025safeexplorationreproducingkernel}. The former is best suited for approaches where it is more reliable to know a limit on the second-order derivative: for example, a motor is bounded by its maximum torque. On the other hand, the latter approach is better in cases where we can independently sample from a family of functions in a probability space: for example, an experimentally determined noise oracle. In our case, $p_c(y)$ is entirely unknown; we neither have knowledge of a bounded higher-order derivative nor a reliable class of functions from a probability space to sample from. Therefore, in this paper, we approximate the Lipschitz constant from available data. 

\section{SUPPLEMENTARY RESULTS}
\label{sec:supplementary_results}

In this section, we report further details from the experiments conducted in Section~\ref{sec:experiments} and also provide further experimental results.

\subsection{Additional figures for dyadic classifier predictions}
\label{sec:dyadic_grids}

In the \textit{dyadic} approach, we construct a hash table that maps the grid index to the aggregated class count vector. Figure~\ref{fig:dyadic_cells} shows two synthetic datasets corresponding to Assumptions~\ref{ass:lipschitz} and~\ref{ass:separable}, and their respective hash tables marked with majority predictions. 

\begin{figure}[h]
    \centering
    \begin{minipage}[t]{\textwidth}
    \centering
    \includegraphics{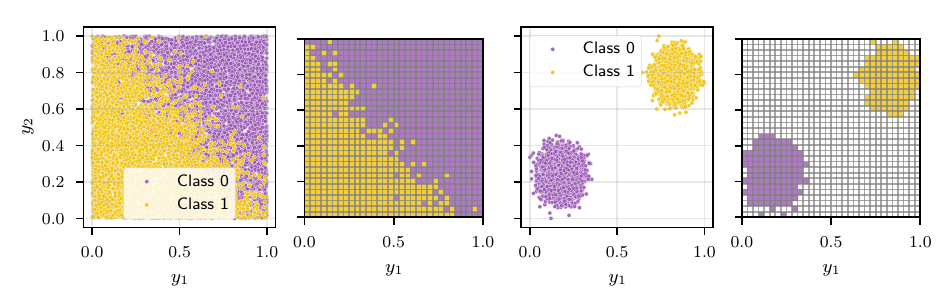}
    \end{minipage}
    \caption{Two synthetic datasets and their corresponding dyadic cell prediction grids. \capt{The figures on the left correspond to the Lipschitz-continuous overlapping dataset; the figures on the right correspond to the dataset separated by a margin.}}
    \label{fig:dyadic_cells}
\end{figure}

\subsection{Additional results from the ECG dataset}
\label{sec:additonal_ecg}

In this section, we show additional results from the experiments conducted in Section~\ref{sec:experiments} on the ECG dataset. Figure~\ref{fig:ecg_waveforms} shows randomly selected ECG waveforms from the testing set and their associated class prediction probabilities. In Figure~\ref{fig:ece} we show the alignment between prediction confidence ($\max \hat p_c$) and accuracy. This can be used to compute the ECE \citep{guo2017CalibrationModernNeural}, which in our case is \SI{7.5}{\percent}. 




\begin{figure*}[h!]
    \centering
    \begin{minipage}{\textwidth}
        \centering
        \resizebox{0.9\textwidth}{!}{\input{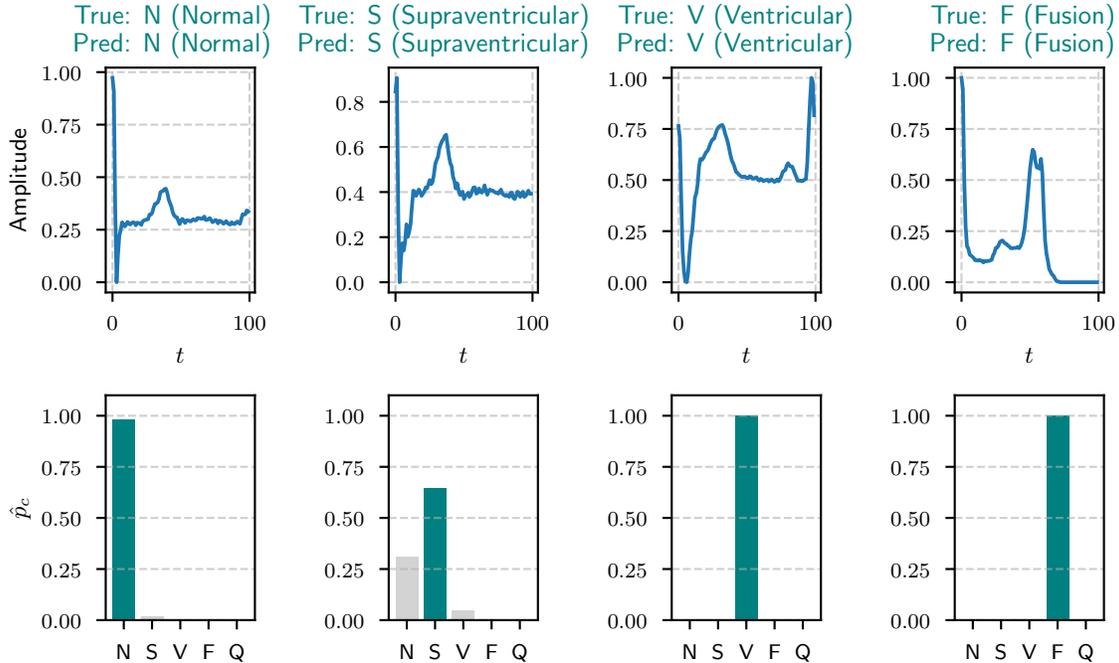}}
    \end{minipage}
    \caption{Illustrative ECG waveforms (amplitude normalized from \si{mV}, $t$ denotes time step) and associated class probabilities for each class from the MIT-BIH database; class Q (unclassifiable) has been excluded.}
    \label{fig:ecg_waveforms}
\end{figure*}



Next, following the conventions of \cite{nnyaba2024EnhancingElectrocardiographyData}, we define a Type I error as the instance where the classifier's prediction, i.e., $\max \hat{p}_c$, is incorrect, and a Type II error as an instance where the intervals are so wide that $\hat{p}_c$ could be less than $0.5$. In Figure~\ref{fig:type} we show the Type I and Type II error counts obtained from the \textit{regular} classifier's predictions on the entire testing set. Here, we also see that the errors are overrepresented in the minority classes. If we count Type II errors as incorrect predictions, the accuracy of our classifier is reduced to \SI{84}{\percent}. 

\begin{figure*}[h]
    \centering
    
    \begin{minipage}[t]{0.47\textwidth}
        \centering
        \resizebox{\textwidth}{!}{\input{figures/ece.pgf}}
        \caption{Accuracy of the \textit{regular} NWC grouped by $\max \hat{p}_c$ bins. \capt{The figure shows how the classifier's prediction confidence correlate with its accuracy.}} 
        \label{fig:ece}
    \end{minipage}
    \hfill
    \begin{minipage}[t]{0.47\textwidth}
        \centering
        \resizebox{\textwidth}{!}{\input{figures/type_errors_by_class.pgf}}
        \caption{Type I and Type II error counts for each class for the \textit{regular} classifier's predictions over the entire testing set. \capt{The inset text also shows the percentage of the dataset these counts represent.}} 
        \label{fig:type}
    \end{minipage}
\end{figure*}
\subsection{Results on the MNIST dataset}
\label{sec:mnist}

In addition to synthetic and ECG datasets, we implemented the proposed classifier on the MNIST handwritten digits dataset \citep{deng2012mnist}. The dataset contains $28 \times 28$ pixel black-and-white images of handwritten digits commonly used for training various image processing classifiers. The dataset contains \SI{60000} training samples split equally among classes and \SI{10000} training samples. Figure~\ref{fig:mnist_probs} shows predictions, accuracy trends, and bound trends for the \textit{regular} classifier trained on the MNIST dataset with $L = 0.03$ and $\lambda = 7.5$. From the figure, we observe that the classifier demonstrates an accuracy of $>$\SI{92}{\percent} on the dataset with strong bounds with just \SI{10000} samples. 

\begin{figure}[h!]
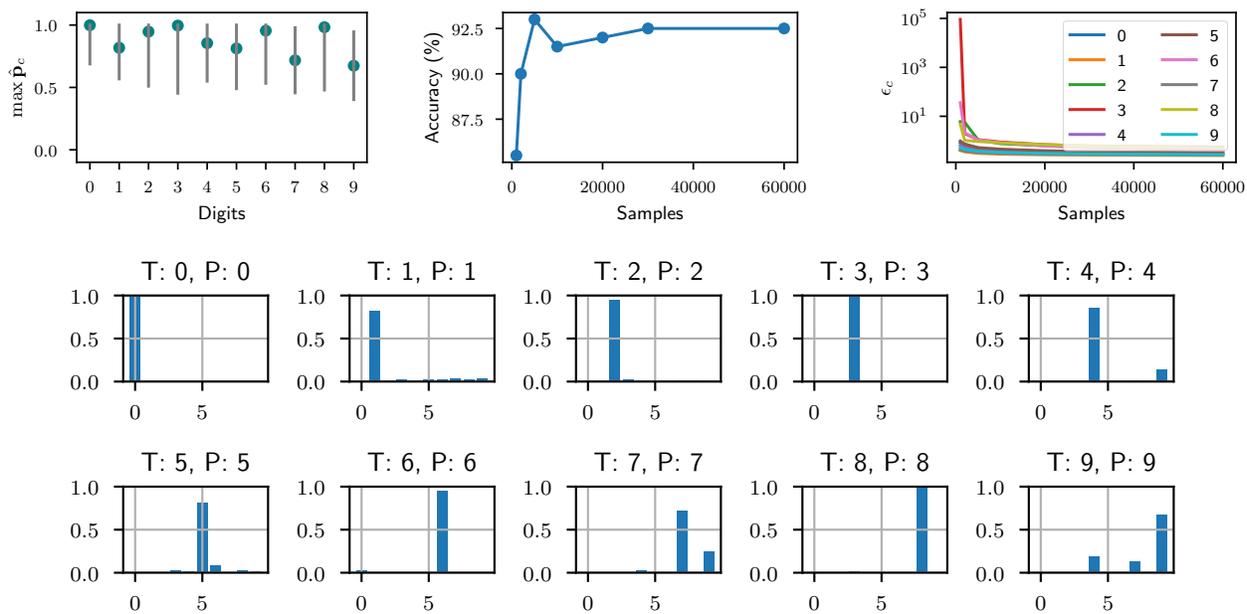

    \centering
    \begin{minipage}[t]{0.32\textwidth}
        \centering
        \resizebox{\textwidth}{!}{\input{figures/predictions.pgf}}
        \label{fig:mnist_predictions}
    \end{minipage}%
    \hfill
    \begin{minipage}[t]{0.32\textwidth}
        \centering
        \resizebox{\textwidth}{!}{\input{figures/accuracy.pgf}}
        \label{fig:mnist_accuracy}
    \end{minipage}%
    \hfill
    \begin{minipage}[t]{0.32\textwidth}
        \resizebox{\textwidth}{!}{\input{figures/bounds_tightening.pgf}}
        \label{fig:mnist_bounds}
    \end{minipage}
    \begin{minipage}{\textwidth}
        \vspace{-2em}
        \centering
        \begin{adjustbox}{clip,trim=0cm 2cm 0cm 2cm}
            \resizebox{0.9\textwidth}{!}{\input{figures/number_probs.pgf}}
        \end{adjustbox}
        \caption{Results of the regular classifier on the MNIST dataset. \capt{On the top row, we see predictions (left) and tightness of the bounds for 10 randomly selected samples of each digit from the dataset. Their corresponding all-class prediction probabilities are shown in the figure below, where true and predicted labels are marked with `T' and `P' respectively. Next on the top row, we show the accuracy (middle) and strength of the bounds (right) of the classifier with varying number of samples $n$.}}
        \label{fig:mnist_probs}
    \end{minipage}
\end{figure}

\subsection{Results on a simplified GTSRB dataset}
\label{sec:gtsrb}

\begin{figure}[h!]
    \centering
    \begin{minipage}[t]{\textwidth}
    \centering
    \includegraphics[width=0.9\textwidth]{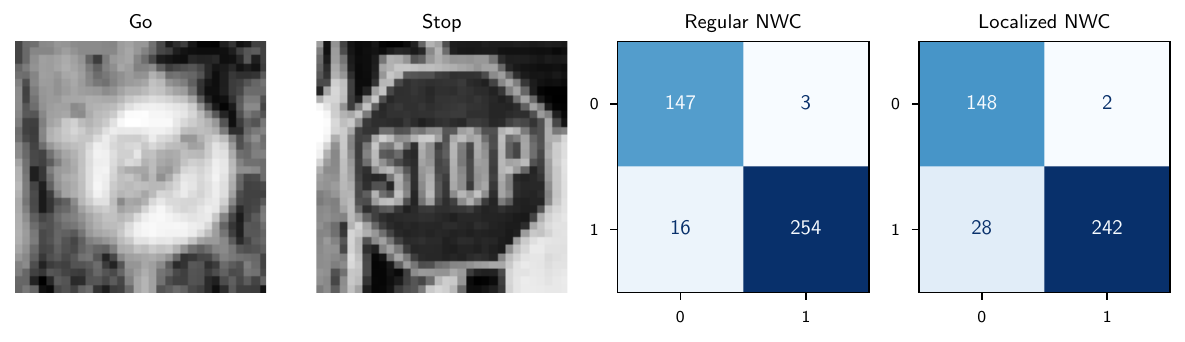}
    \end{minipage}
    \caption{Results from the reduced GTSRB dataset. \capt{On the left are the two classes considered in our study. On the right are confusion matrices from the two proposed variants of our classifier. In the confusion matrices, the label 0 corresponds to the `go' sign and 1 corresponds to the `stop' sign.}}
    \label{fig:gtsrb}
\end{figure}

The improved computational efficiency of the Nadaraya-Watson estimator is of significant advantage when applied to higher-dimensional systems. While we overcome the computational bottleneck associated with GP-like methods, we are still limited by how expressive distances can be in higher dimensional spaces. These limits are best explored and tested in the context of high-dimensional RGB images. We evaluate our \textit{regular} and \textit{localized} classifier on a subset of the German Traffic Sign Recognition Benchmark (GTSRB) dataset. We simplify to a binary classification problem including only the `stop' sign and `end of speed limit' sign. Figure~\ref{fig:gtsrb} shows these results with $L = 0.05$, $\lambda = 7.5$, and $k=20$ for the \textit{localized} classifier. 

\subsection{Comparison with other baselines}
\label{sec:baselines}

Tasked with ECG heartbeat classification, various classical and deep learning methods have achieved highly accurate results on the MIT-BIH database. Our contribution lies not only in achieving high accuracy scores but also in providing actionable uncertainty quantification in relation to the classifier's predictions. These theoretical guarantees are often entirely absent in such methods. Nevertheless, in this section, we summarize and review the accuracy obtained by such methods (those cited in Section~\ref{sec:discussion-related-work}) to contextualize our work's performance with respect to existing methods. For a more extensive comparison, we refer the reader to Table 1 and Table 2 from~\cite{sattar2024}. 
\begin{table}[ht]
    \centering
    \caption{Summary of reported ECG classification results in the literature.}
    \begin{tabularx}{\linewidth}{
        p{0.24\linewidth} 
        X                 
        >{\centering\arraybackslash}p{0.12\linewidth}
        >{\centering\arraybackslash}p{0.1\linewidth}
    }
        \toprule
        \textbf{Literature} & \textbf{Remark} & \textbf{Accuracy} & \textbf{Bounds} \\
        \midrule
        \cite{kumari2022ClassificationECGbeatsa} & Optimized decision tree & \SI{98.77}{\percent} & \ding{55} \\
        \cite{zhou2024MultimodalECGheartbeat} & CNN with frequency channel attention &  \SI{99.6}{\percent} & \ding{55} \\
        \cite{abdalla2020Deepconvolutionalneural} & CNN with 11 layers & \SI{99.84}{\percent} & \ding{55} \\
        \cite{gao2019EffectiveLSTMRecurrent} & Long short-term memory model with focal loss & \SI{99.26}{\percent} & \ding{55} \\
        \cite{jha2020Cardiacarrhythmiaclassification} & SVM with tunable Q-wavelet transform & \SI{99.27}{\percent} & \ding{55} \\
        \cite{nnyaba2024EnhancingElectrocardiographyData} & GP classification with bounds & $\approx$ \SI{98}{\percent} & \checkmark \\
        \bottomrule
    \end{tabularx}
\end{table}

\section{ABLATIONS}
\label{sec:ablations}

In this section, we perform ablation studies on kernel choice and bandwidth parameter $\lambda$. As the Nadaraya-Watson estimator is a non-parametric kernel regressor, we study the effect different kernel functions have on the nature of the estimate. Table~\ref{tab:kernels} lists the different kernel functions we evaluate our classifier with. Correspondingly, for each kernel, we optimize to find the bandwidth $\lambda$ that maximizes the classifier's accuracy and bounds.  

To this end, we define a weighted objective function that tries to balance accuracy $A$ and average uncertainty interval $B$ as a function of a user-defined weight $r$: 
\begin{align}
    \label{eq:hyperopt}
    J(A, B) = r A - (1 - r )B.
\end{align}
Here, we utilize a simple weighted-sum that awards a higher score if $A$ is maximized and $B$ is minimized. The weight $r$ directly quantifies the trade-off between the two potentially competing measures. 

\begin{table}[htpb]
    \caption{Different kernel functions and their definitions.}
    \vspace{1em}
    \centering
    \begin{tabular}{l c}
        \toprule
        \textbf{Kernel} & \textbf{Definition} \\
        \midrule
        Boxcar kernel & $\displaystyle K(v) = 
                        \begin{cases} 
                        1 & \text{if } \|v\| \leq 1 \\ 
                        0 & \text{otherwise} 
                        \end{cases}$ \\
        Gaussian kernel (without truncation) & $\displaystyle K(v) = \exp\left(-\frac{\|v\|^2}{2}\right)$ \\
        Epanechnikov kernel & $\displaystyle K(v) = 
                \begin{cases} 
                1 - v^2 & \text{if } \|v\| \le 1 \\ 
                0 & \text{otherwise} 
                \end{cases}$ \\ 
        Quartic kernel & $\displaystyle K(v) = 
                \begin{cases} 
                (1 - v^2)^2 & \text{if } \|v\| \le 1 \\ 
                0 & \text{otherwise} 
                \end{cases}$ \\
        Triweight kernel & $\displaystyle K(v) = 
                \begin{cases} 
                (1 - v^2)^3 & \text{if } \|v\| \le 1 \\ 
                0 & \text{otherwise} 
                \end{cases}$ \\
        Tricube kernel & $\displaystyle K(v) = 
                \begin{cases} 
                (1 - |v|^3)^3 & \text{if } \|v\| < 1 \\ 
                0 & \text{otherwise} 
                \end{cases}$ \\
        Cosine kernel & $\displaystyle K(v) = 
                        \begin{cases} 
                        \frac{\pi}{4}\cos\left(\frac{\pi}{2}v\right) & \text{if } \|v\| \le 1 \\ 
                        0 & \text{otherwise} 
                        \end{cases}$ \\
        \bottomrule
    \end{tabular}
    \label{tab:kernels}
\end{table}

The optimization experiments were conducted on the MNIST dataset. We trained the \textit{regular} NWC on \SI{30000} samples of the MNIST dataset with all the kernels from Table~\ref{tab:kernels}. We set the weight $r$ in \eqref{eq:hyperopt} as $0.95$. Since $A$ and $B$ are experimentally computed, there is no direct way to find derivatives of $J$ with respect to $\lambda$; thus, we used Powell's conjugate direction method to optimize for the score $J$ \citep{powell}. The results from Figure~\ref{fig:kernels_hyperopt} show that the Epanechnikov kernel offered the best performance. Yet, as we see from the $y$-axis of the figure, the improvement difference was marginal. This aligns with previous research that have shown the limited effect of kernel choice on the Nadaraya-Watson estimator \citep{hardle2002Appliednonparametricregression}.

\begin{figure}[htpb]
    \centering
    \begin{minipage}[t]{0.48\textwidth}
        \centering
        \resizebox{\textwidth}{!}{\input{figures/kernels.pgf}}
    \end{minipage}
    \begin{minipage}[t]{0.48\textwidth}        
        \centering
        \includegraphics[width=0.92\textwidth]{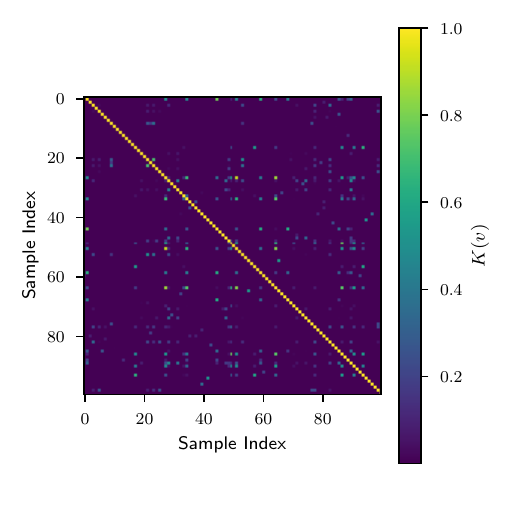}
    \end{minipage}
    \caption{Hyperparameter optimization on the kernels. \capt{On the left, we show the baseline and optimized score $J(A, B)$ on all kernels from Table~\ref{tab:kernels}. Here, the results show that the effect of hyperparameter optimization to find the optimal $\lambda$ on each kernel is, at best, only moderately significant. On the right, we show the Epanechnikov kernel evaluated at all pairs of 100 randomly sampled MNIST images with optimized $\lambda = 7.5$.}}
    \label{fig:kernels_hyperopt}
\end{figure}
\end{document}